\definecolor{cvprblue}{rgb}{0.21,0.49,0.74}
\newtheorem{assumption}{Assumption}
\newtheorem{theorem}{Theorem}
\newtheorem{lemma}{Lemma}
\newtheorem{corollary}{Corollary}
 \newcommand{\norm}{\texttt{norm}}
\title{Friendly Sharpness-Aware Minimization}
\author{Tao Li$^1$ \qquad Pan Zhou$^2$ \qquad Zhengbao He$^1$ \qquad Xinwen Cheng$^1$ \qquad Xiaolin Huang$^1$\\
$^1$Department of Automation,
Shanghai Jiao Tong University, Shanghai, China\\
$^2$School of Computing and Information Systems, Singapore Management University, Singapore \\
{\tt\small \{li.tao,lstefanie,xinwencheng,xiaolinhuang\}@sjtu.edu.cn \quad  panzhou@smu.edu.sg}
}
\begin{document}
\maketitle
\begin{abstract}

Sharpness-Aware Minimization (SAM) 
has been instrumental in improving deep neural network training by minimizing both training loss and loss sharpness. Despite the practical success, the mechanisms behind SAM's generalization enhancements remain elusive, limiting its progress in deep learning optimization. In this work, we investigate SAM's core components for generalization improvement and introduce ``Friendly-SAM'' (F-SAM) to further enhance SAM's generalization. Our investigation reveals the key role of batch-specific stochastic gradient noise within the adversarial perturbation, i.e., the current minibatch gradient, which significantly influences SAM's generalization performance. By decomposing the adversarial perturbation in SAM into full gradient and stochastic gradient noise components, we discover that relying solely on the full gradient component degrades generalization while excluding it leads to improved performance.   
The possible reason lies in the full gradient component's increase in sharpness loss for the entire dataset, creating inconsistencies with the subsequent sharpness minimization step solely on the current minibatch data. Inspired by these insights, F-SAM aims to mitigate the negative effects of the full gradient component. It removes the full gradient estimated by an exponentially moving average (EMA) of historical stochastic gradients, and then leverages stochastic gradient noise for improved generalization. Moreover, we provide theoretical validation for the EMA approximation and prove the convergence of F-SAM  on non-convex problems.  Extensive experiments demonstrate the superior generalization performance and robustness of F-SAM over vanilla SAM. Code is available at {\url{https://github.com/nblt/F-SAM}}.  
\end{abstract}    
\section{Introduction}
\label{sec:intro}

\begin{figure}[t]
	\centering
	\includegraphics[width=0.50\linewidth]{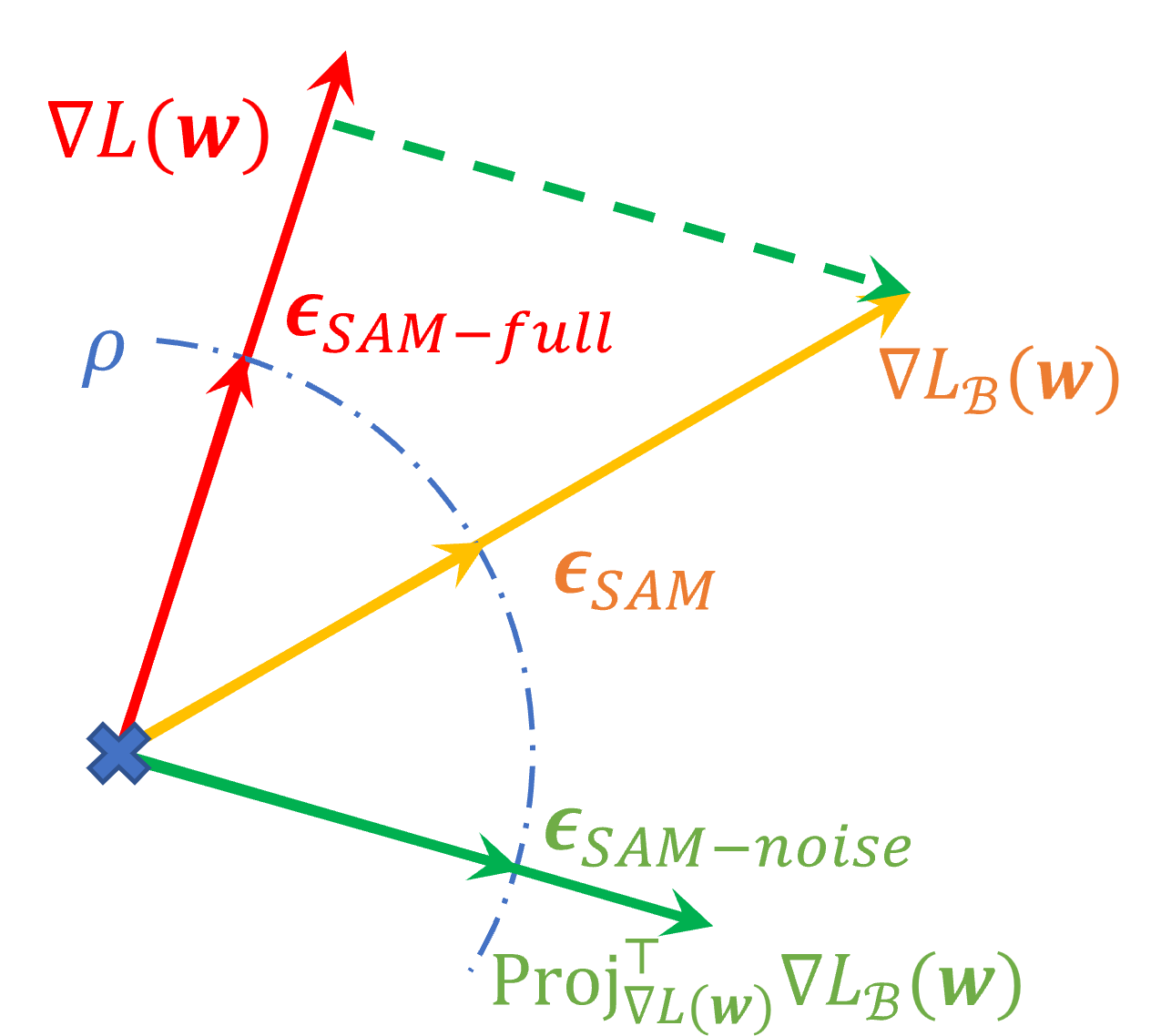}
	\includegraphics[width=0.49\linewidth]{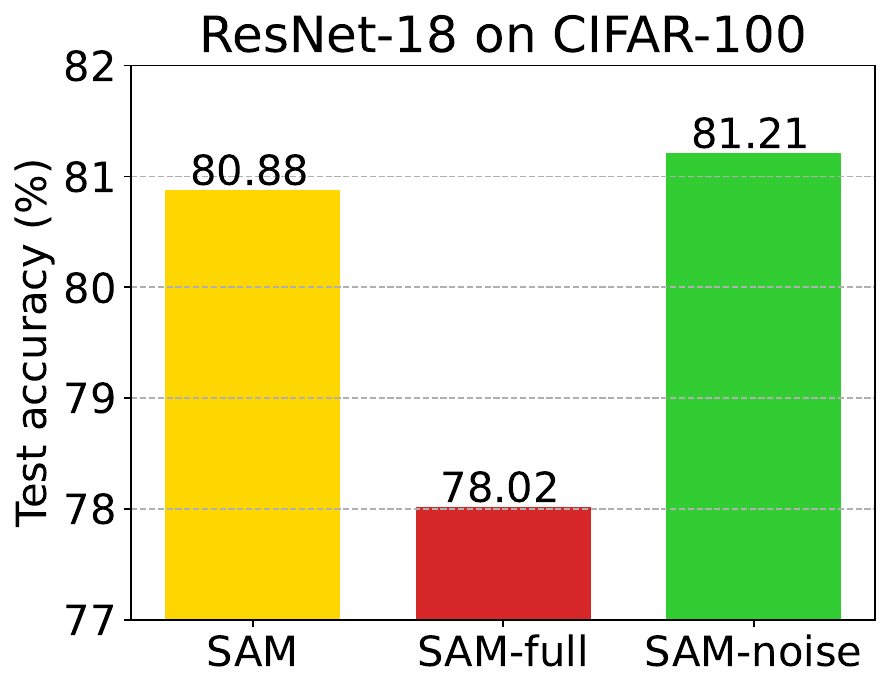}
	\caption{Investigation on SAM's adversarial  perturbation direction. We decompose the minibatch gradient $\nabla_\mathcal{B} L(\boldsymbol{w})$ into two  components: the full gradient component and the remaining batch-specific stochastic gradient noise.  Solely using the full gradient component leads to a dramatic generalization degradation, while only using the noise component enhances the generalization.
 }
 \vspace{-5mm}
	\label{fig:illustration}
\end{figure}

Deep neural networks (DNNs) have achieved remarkable performance in various vision and language processing tasks \cite{devlin2018bert, brown2020language, liu2021swin}. A critical factor contributing to their success is the choice of optimization algorithms \cite{robbins1951stochastic,kingma2014adam,loshchilov2017decoupled,zhuang2020adabelief,xie2022adan,zhou2022win}, designed to efficiently optimize DNN model parameters. To achieve better performance, it is often desirable for an optimizer to converge to flat minima, characterized by uniformly low loss values, as opposed to sharp minima with abrupt loss changes, as the former typically results in better generalization \cite{hochreiter1997flat,dinh2017sharp, li2018visualizing,zhou2020towards}. However, the complex landscape of over-parameterized DNNs, featuring numerous sharp minima, presents challenges for optimizers in real applications.

To address this challenge, Sharpness-Aware Minimization (SAM) \cite{foret2020sharpness} is proposed to simultaneously minimize the training loss  and the loss sharpness, enabling it to identify flat minima associated with improved generalization performance.  For each training iteration, given the minibatch training loss   $L_{\mathcal{B}}(\boldsymbol{w})$ parameterized by  $\boldsymbol{w}$, SAM achieves this by adversarially computing a adversarial perturbation  $\boldsymbol{\epsilon}_s$ to maximize  the   training loss  $L_{\mathcal{B}}(\boldsymbol{w}+\boldsymbol{\epsilon}_s)$. It then minimizes the loss of this perturbed objective via one updating step of a base optimizer, such as SGD~\citep{robbins1951stochastic}. In practice, to efficiently compute  $\boldsymbol{\epsilon}_s$,  SAM takes a linear approximation of the loss objective, and employs the minibatch gradient  $\nabla L_{\mathcal{B}}(\boldsymbol{w})$ as the accent direction for searching $\boldsymbol{\epsilon}_s = \rho\cdot \norm(\nabla L_{\mathcal{B}}(\boldsymbol{w}))$, where $\norm(\boldsymbol{x})= \boldsymbol{x}/ \|\boldsymbol{x}\|_2$ and $\rho>0$ is a radius. This approach enables SAM to seek models situated in neighborhoods characterized by consistently low loss values, resulting in better generalization when used in conjunction with base optimizers such as SGD and Adam~\cite{kingma2014adam}.

{Despite SAM's practical success, the underlying mechanism responsible for its empirical generalization improvements remains limited} \cite{andriushchenko2022towards,chen2023does}.
This open challenge hinders developing new and more advanced deep learning optimization algorithms  in a principle way.  
To address this issue, we undertake a comprehensive exploration of SAM's core components contributing to its generalization improvement. Subsequently, we introduce a new variant of SAM, which offers a simple  yet  effective approach to further enhance the generalization  performance  in conjunction with popular base optimizers.

First, our investigations reveal that the batch-specific stochastic gradient noise present in the minibatch gradient $\nabla L_{\mathcal{B}}(\boldsymbol{w})$ plays a crucial role in SAM's generalization performance. Specifically,  as shown in Fig.~\ref{fig:illustration}, we decompose the perturbation direction $\nabla L_{\mathcal{B}}(\boldsymbol{w})$ into two orthogonal components: the full gradient component $\mathrm{Proj}_{\nabla L(\boldsymbol{w})} \nabla L_{\mathcal{B}}(\boldsymbol{w})$, representing the projection of $\nabla L_{\mathcal{B}}(\boldsymbol{w})$ onto the full gradient direction $\nabla L(\boldsymbol{w})$, and the stochastic gradient noise component $\mathrm{Proj}^\top_{\nabla L(\boldsymbol{w})} \nabla L_{\mathcal{B}}(\boldsymbol{w})$, denoting the residual projection. Surprisingly, we empirically find that as shown in Fig.~\ref{fig:illustration},  using only the full gradient component for perturbation significantly degrades SAM's generalization, which is also observed by \cite{andriushchenko2022towards}.
Conversely, excluding the full gradient component leads to improved generalization performance. This observation suggests that the effectiveness of SAM primarily arises from the presence of the stochastic gradient noise component within the minibatch gradient $\nabla L_{\mathcal{B}}(\boldsymbol{w})$. The reason behind this phenomenon may be that the full gradient component in the perturbation increases the sharpness loss of the entire dataset, leading to inconsistencies with the subsequent sharpness minimization step which only uses the current minibatch data to minimize training loss value and its sharpness.   See more discussion in Sec.~\ref{sec:investigation}.

Next, building on this insight, we introduce a straightforward yet highly effective modification to SAM, referred to as ``Friendly-SAM" or F-SAM. F-SAM mitigates the undesirable effects of the full gradient component within the minibatch gradient and leverages the beneficial role of batch-specific stochastic gradient noise to further enhance SAM's generalization performance.  Since removing the full gradient component in the perturbation step can relieve the adversarial perturbation impact on other data points except for current minibatch data, the adversarial perturbation in F-SAM is more ``friendly'' to other data points compared with vanilla SAM, which helps the adversarial perturbation and the following minimization perform on the same minibatch data and improves sharpness minimization consistency.  Moreover, such friendliness enables F-SAM to be less sensitive to the choice of perturbation radius as discussed in \cref{sec:ablation}.  For efficiency,  F-SAM approximates the computationally expensive full gradient through an exponentially moving average (EMA) of the historical stochastic gradients and computes the stochastic gradient noise as the perturbation direction.  
Moreover, we also provide theoretical validation of the EMA approximation to the full gradient and prove the convergence of F-SAM on non-convex problems.  Our main contribution can be summarized as follows:
\begin{itemize}
	\item We take an in-depth investigation into the key component in SAM's adversarial perturbation and identify that the  stochastic gradient noise plays the most significant role. 
	\item We propose a novel variant of SAM called F-SAM by eliminating the undesired full gradient component and harnessing the beneficial stochastic gradient noise for adversarial perturbation to enhance generalization. 
	\item Extensive experimental results demonstrate that F-SAM improves SAM's generalization, aids in training, and exhibits better robustness across different perturbation radii.
\end{itemize}

\begin{figure*}[!t]
	\centering
	\begin{subfigure}{0.335\linewidth}
		\centering
		\includegraphics[width=1\linewidth]{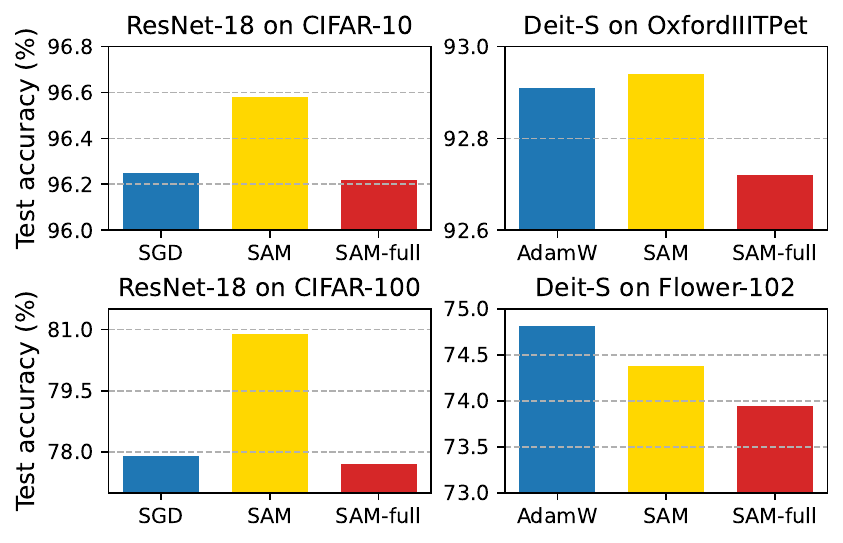}
		\caption{}
		\label{fig:sam-fb}
	\end{subfigure}
	\hspace{0.02in}
	\begin{subfigure}{0.345\linewidth}
		\centering
		\includegraphics[width=1\linewidth]{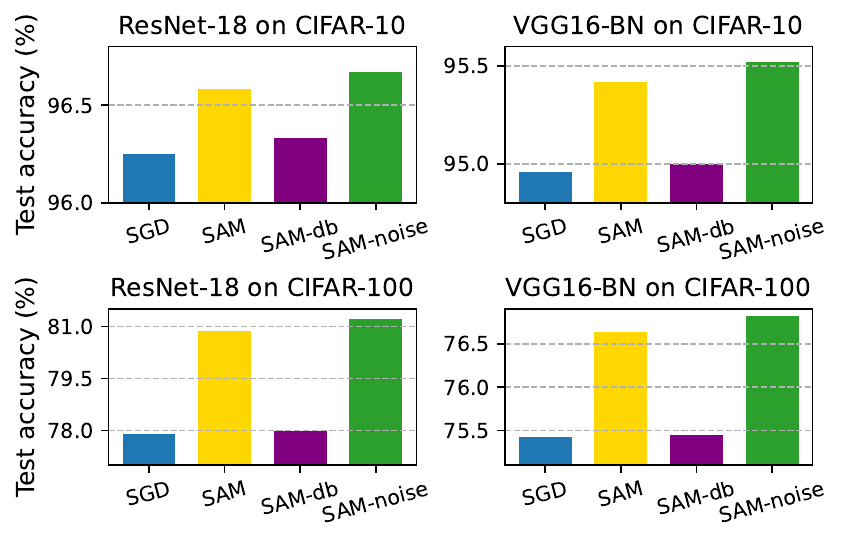}
		\caption{}  
		\label{fig:sam-db}
	\end{subfigure}
	\hspace{0.02in}
	\begin{subfigure}{0.26\linewidth}
		\centering
		\hspace{0.02in}
		\includegraphics[width=1\linewidth]{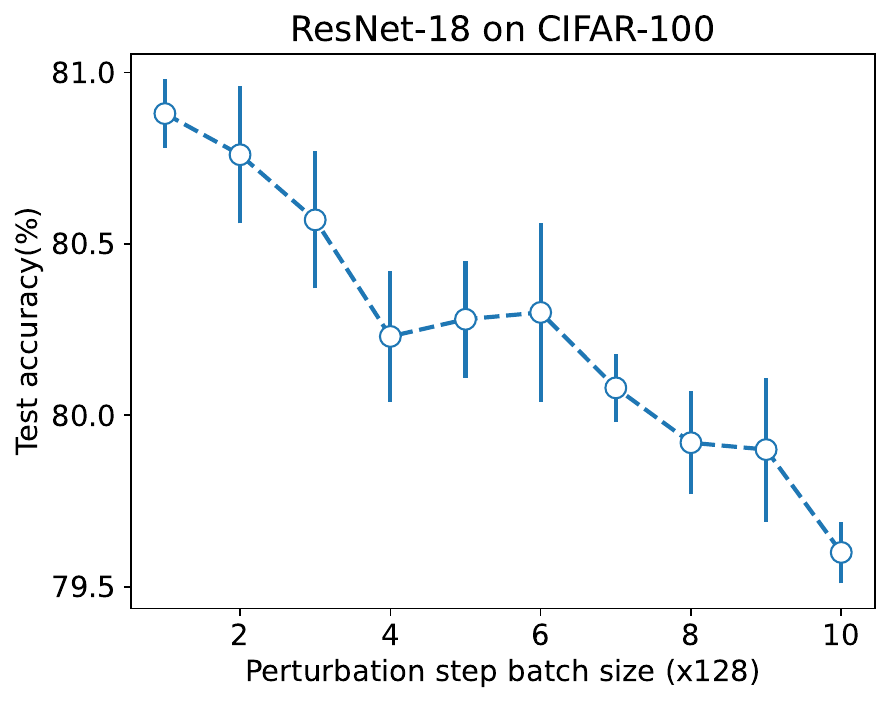}
		\caption{}  
		\label{fig:sam-strength}
	\end{subfigure}
	\caption{Performance comparison of different versions of SAM with SGD/AdamW as its base optimizer.  In (a),  SAM-full denotes the version of SAM using the full gradient component as the perturbation. In (b), SAM-db represents the SAM using different random-selected data batch for perturbation and its following minimization step.  (c) compares  SAM using different minibatch size to compute the perturbation but always fixing minibatch size of 128 for the following minimization step. 
} 
	\label{fig:investigation}
\end{figure*}

\section{Related Work}

\textbf{Sharp Minima and Generalization.}
The connection between the flatness of local minima and generalization has received a rich body of studies \cite{hochreiter1997flat, chaudhari2017entropy, keskar2017large,dinh2017sharp, izmailov2018averaging, li2018visualizing,zhou2020towards}.
Recently, many works have tried to improve the model generalization by seeking flat minima \cite{chaudhari2017entropy, wen2018smoothout, tsuzuku2020normalized, foret2020sharpness, zheng2021regularizing, bisla2022low, li2022efficient,li2023revisiting}. For example, Chaudhari et al. \cite{chaudhari2017entropy} proposed Entropy-SGD that actively searches for flat regions by minimizing local entropy.
Wen et al. \cite{wen2018smoothout} designed SmoothOut framework to smooth out the sharp minima and obtain generalization improvements.
Notably, sharpness-aware minimization (SAM) \cite{foret2020sharpness} provides a generic training scheme for seeking flat minima by formulating the optimization as a min-max problem and encourage parameters sitting in neighborhoods with uniformly low loss, achieving state-of-the-art generalization improvements across various tasks. 
Later, a line of works improves  SAM from the perspective of the neighborhood's geometric measure \cite{kwon2021asam, kim2022fisher}, surrogate loss function \cite{zhuang2022surrogate}, friendly adversary \cite{li2023enhancing}, and training efficiency \cite{du2022efficient, liu2022towards, du2022sharpness,zhao2022ss,jiangadaptive}. 


\vspace{2mm}
\noindent
\textbf{Understanding SAM.}
Despite the empirical success of SAM across various tasks, a deep understanding of its generalization performance is still limited.
Foret et al. \cite{foret2020sharpness} explained the success of SAM via using a PAC-Bayes generalization bound. However, they did not analyze the key components of SAM that contributes to its generalization improvement.  
Andriushchenko et al. \cite{andriushchenko2022towards} investigated the implicit bias of SAM for diagonal linear networks. 
Wen et al. {\cite{wen2022sharpness}} demonstrated that SAM minimizes the top eigenvalue of Hessian in the full-batch setting and thus improves the flatness of the minima.
Chen et al. \cite{chen2023does} studied   SAM on the non-smooth convolutional ReLU networks, and explained  its  success because of  its ability to prevent noise learning. While previous works primarily focus on simplified networks or ideal loss objectives, we aim to deepen the understanding of SAM by undertaking a deep investigation into the key components that contribute to its practical generalization improvement.


\section{Preliminaries}
\label{sec:preliminary} 

\noindent\textbf{SAM.}  Let $f(\boldsymbol{x};\boldsymbol{w})$ parameterized by  $\boldsymbol{w} \in \mathbb{R}^d$ be a neural network, and $L(f(\boldsymbol{x}_i;\boldsymbol{w}), \boldsymbol{y}_i)$ ($L_i(\boldsymbol{w})$ for short)   denote the loss to measure the discrepancy between the  prediction $f(\boldsymbol{x}_i;\boldsymbol{w})$ and the ground-truth label $\boldsymbol{y}_i$.  Given a dataset $\mathcal{S}={ (\boldsymbol{x}_i,\boldsymbol{y}_i) }_{i=1}^n$ i.i.d. drawn from a data distribution $\mathcal{D}$, the empirical training loss is often defined as 
\begin{equation}
   L(\boldsymbol{w})= \frac{1}{n}\sum\nolimits_{i=1}^nL_i(\boldsymbol{w}). 
\end{equation}
To solve this problem, one often uses SGD  or Adam to optimize it.  To further improve the generalization performance, 
SAM~\cite{foret2020sharpness} aims to minimize the worst-case loss within a defined neighborhood  for guiding the training process towards flat minima. The objective function of SAM is given by:
\begin{equation}
	L^{\rm SAM}(\boldsymbol{w})= \max\nolimits_{\| \boldsymbol{\epsilon} \|_2 \le \rho} L(\boldsymbol{w}+\boldsymbol{\epsilon}),
	\label{equ:sam}
\end{equation}
where $\rho$ denotes the neighborhood radius. Eqn.~\eqref{equ:sam} shows that  SAM seeks to minimize the maximum loss over the neighborhood surrounding the current weight  $\boldsymbol{w}$. By doing so, SAM encourages the optimization process to converge to flat minima which often enjoys better generalization  performance~\cite{hochreiter1997flat,dinh2017sharp, li2018visualizing,zhou2020towards}.

To efficiently optimize $	L^{\rm SAM}(\boldsymbol{w})$, SAM first approximates  Eqn.~\eqref{equ:sam} via first-order expansion and compute the adversarial perturbation $\boldsymbol{\epsilon}_s$ as follows:
\begin{equation}
	\boldsymbol{\epsilon}_s\approx \mathop{ \arg \max}\nolimits_{\| \boldsymbol{\epsilon} \|_2\le \rho} \boldsymbol{\epsilon}^{\top} \nabla_{\boldsymbol{w}} L(\boldsymbol{w})= \rho \cdot\norm(\nabla_ {\boldsymbol{w}} L(\boldsymbol{w})). 
	\label{equ:sam-first-order}
\end{equation}
where $\norm(\boldsymbol{x})= \boldsymbol{x}/ \|\boldsymbol{x}\|_2$. 
Subsequently, one can compute the gradient at the perturbed point $\boldsymbol{w}+\boldsymbol{\epsilon}_s$, and then use the updating step of the base SGD optimizer to update  
\begin{equation}\label{eqn:gradient}
	\boldsymbol{w}_{t+1} = \boldsymbol{w}_t - \gamma	\nabla L_{\mathcal{B}} (\boldsymbol{w}) |_{\boldsymbol{w}_t+\boldsymbol{\epsilon}_t}. 
\end{equation}
Other base optimizers, e.g., Adam, can also be used to update the model parameters in Eqn.~\eqref{eqn:gradient}. 
%

\noindent\textbf{Assumptions.}  Before delving into our analysis, we first make some standard assumptions in stochastic optimization \cite{ghadimi2013stochastic,karimi2016linear,andriushchenko2022towards,jiangadaptive} that will be used in our theoretical analysis.
\begin{assumption}[$\beta$-smoothness]
	\label{assumption1}
	Assume the loss function $L:\mathbb{R}^d \mapsto \mathbb{R}$ to be $\beta$-smooth. There exists $\beta>0$ such that
	\begin{align}
		\| \nabla L(\boldsymbol{w})-\nabla L(\boldsymbol{v}) \|_2 \le \beta \| \boldsymbol{w}-\boldsymbol{v} \|_2, \ \ \forall \boldsymbol{w}, \boldsymbol{v} \in  \mathbb{R}^d.
	\end{align}
\end{assumption}

\begin{assumption}[Bounded variance]
	\label{assumption2}
	There exists a constant $M>0$ for any data batch $\mathcal{B}$ such that
	\begin{align}
\mathbb{E}	\left[\| \nabla L_\mathcal{B}(\boldsymbol{w}) - \nabla L(\boldsymbol{w})  \|_2^2 \right] \le M, \quad \forall \boldsymbol{w} \in  \mathbb{R}^d.
	\end{align}
\end{assumption}

\begin{assumption}[Bounded gradient]
	\label{assumption3}
	There exists $G>0$ for any data batch $\mathcal{B}$ such that
	\begin{align}
	 \mathbb{E}	\left[	\| \nabla L_\mathcal{B}(\boldsymbol{w}) \|_2  \right] \le G, \quad \forall \boldsymbol{w} \in  \mathbb{R}^d.
	\end{align}
\end{assumption}

\section{Empirical Analysis of SAM}
\label{sec:investigation}
Here we conduct a set of experiments to identify the effective components in the adversarial perturbation that contributes to SAM's improved generalization performance. 
To this end, we decompose SAM's perturbation direction  $\nabla L_{\mathcal{B}}(\boldsymbol{w})$  in Eqn.~\eqref{equ:sam-first-order}  into two orthogonal components. The first one is the full gradient component $\mathrm{Proj}_{\nabla L(\boldsymbol{w})} \nabla L_{\mathcal{B}}(\boldsymbol{w}) $ which  is the projection  onto the full gradient direction, i.e., 
\begin{equation*}
\mathrm{Proj}_{\nabla L(\boldsymbol{w})} \nabla L_{\mathcal{B}}(\boldsymbol{w}) = \cos(\nabla L(\boldsymbol{w}), L_{\mathcal{B}}(\boldsymbol{w})) \nabla L(\boldsymbol{w}),
\end{equation*}
where $\cos(\cdot, \cdot)$ denotes the cosine similarity function.  
Another one is  the residual projection, i.e., the batch-specific  stochastic gradient noise component which is defined as    
\begin{equation}\label{equ:residual-projection}
\mathrm{Proj}^\top_{\nabla L(\boldsymbol{w})} \nabla L_{\mathcal{B}}(\boldsymbol{w}) =  \nabla L_{\mathcal{B}}(\boldsymbol{w})  - \mathrm{Proj}_{\nabla L(\boldsymbol{w})} \nabla L_{\mathcal{B}}(\boldsymbol{w}).
\end{equation}
In the following, we will investigate the effects of these two orthogonal components to the performance of SAM.

\subsection{Effect of Full Gradient Component} 
Here we investigate the impact of the full gradient component
on the generalization performance of SAM. 
Specifically, for each training iteration, we set the perturbation $\boldsymbol{\epsilon}_s$  as $\boldsymbol{\epsilon}_s=\rho \cdot \norm(\mathrm{Proj}_{\nabla L(\boldsymbol{w})} \nabla L_{\mathcal{B}}(\boldsymbol{w}) )$ where $\norm(\boldsymbol{x})= \boldsymbol{x}/ \|\boldsymbol{x}\|_2$, and then follow the minimization step in vanilla SAM to minimize the perturbed training loss. For brevity, this modified SAM version is called ``SAM-full" since it uses the full gradient direction as the perturbation direction. 
Then we follow the standard training settings of SAM and compare the performance of SGD/AdamW, SAM, and SAM-full in training from scratch and transfer learning tasks.  See more training details in Appendix~A5.

 \cref{fig:sam-fb} (a) summarizes the results. One can observe that SAM significantly outperforms the base optimizer in most scenarios, except for the Flower-102 dataset~\cite{nilsback2008automated}, where the training data are very limited (see  \cref{sec:transfer} for more discussion). However, as shown in  \cref{fig:sam-fb} (a),  SAM-full indeed impairs the performance of base optimizers, such as SGD base optimizer on the CIFAR-10, CIFAR-100~\cite{krizhevsky2009learning}, and OxFordIIITPet datasets~\cite{parkhi2012cats}, and even severely hurt the performance of  AdamW base optimizer on the Flower-102 dataset, of which the training sample number is small. This implies that 1) utilizing the full gradient  
component $\mathrm{Proj}_{\nabla L(\boldsymbol{w})} \nabla L_{\mathcal{B}}(\boldsymbol{w})$  
as the perturbation direction actually does not improve generalization and can even have negative effects; 2) the effectiveness of SAM primarily arises from the presence of stochastic gradient noise component  $\mathrm{Proj}^\top_{\nabla L(\boldsymbol{w})} \nabla L_{\mathcal{B}}(\boldsymbol{w})$ within the minibatch gradient $\nabla L_{\mathcal{B}}(\boldsymbol{w})$ in adversarial perturbation.

To provide more evidence, we further modify the perturbation step in SAM by increasing the minibatch size in Eqn.~\eqref{equ:sam-first-order} to compute the adversarial perturbation . As illustrated in  \cref{fig:sam-strength}, we use a minibatch $\mathcal{B}'$ with size $k\times 128$ to compute the perturbation in SAM by following Eqn.~\eqref{equ:sam-first-order}, where $k$ is selected from $\{1, 2, 3, \cdots, 10\}$. However, we maintain the use of a minibatch $\mathcal{B}$ of size 128 for the second minimization step of standard SAM  in Eqn.~\eqref{eqn:gradient}. Here we ensure the sample set $\mathcal{B}$ in the second minimization belong to the sample set $\mathcal{B}'$ in the first perturbation step, namely, $\mathcal{B} \subseteq \mathcal{B}'$, which allows to better observe the effects of large minibatch to the performance of SAM.   
For the perturbation $\boldsymbol{\epsilon}_s = \rho \cdot \norm{\nabla L_{\mathcal{B}'}(\boldsymbol{w})}$, its direction gradually aligns more closely with 
the full gradient direction along with increment of $k$. 
Consequently, as the minibatch size grows, the contribution of the full gradient component in the perturbation step becomes stronger, while the residual stochastic gradient noise component weakens. This analysis helps to reveal the perturbation effects of the full gradient direction to  SAM's performance.

By observing \cref{fig:sam-strength},  one can find that when minibatch size of $\mathcal{B}'$ in the first adversarial perturbation step increases, the classification accuracy of SAM gradually decreases when using ResNet18 on CIFAR-100 dataset. See more similar experimental results in Appendix~A5.
All these results together   indicate  that  strengthening the full gradient component in the perturbation step indeed  impairs the generalization performance of vanilla SAM.

\subsection{Effect of Stochastic Gradient Noise Component}

To delve into the role of the batch-specific stochastic gradient noise in the generalization improvement of SAM, we first replace the stochastic gradient noise component of the current minibatch data with another random minibatch. That is, for each training iteration, we use two different minibatch data of the same size to do the first adversarial perturbation step~\eqref{equ:sam-first-order} and the second minimization step~\eqref{eqn:gradient}. We term this modified version ``SAM-db''. As illustrated in \cref{fig:sam-db},  we observe that for the four training settings, SAM-db with SGD as its base optimizer often does not make significant improvements in generalization compared to  SGD. 
Note that the adversarial perturbation in SAM-db contains a similar full gradient component as vanilla SAM (in terms of expectation), but the key difference lies in replacing the stochastic gradient noise component from the current minibatch data with another random minibatch. However, this substitution ultimately hinders generalization.
Therefore, we conclude that the stochastic gradient noise associated with the minibatch in decent step in perturbation plays a pivotal role in improving the generalization of SAM. 

To further solidify this observation, 
we modify SAM by setting the adversarial perturbation $\boldsymbol{\epsilon}_s$ in vanilla SAM as $\boldsymbol{\epsilon}_s = \rho\cdot \norm(\mathrm{Proj}^\top_{\nabla L(\boldsymbol{w})} \nabla L_{\mathcal{B}}(\boldsymbol{w}))$. Note, in this modification, we compute the full gradient for each iteration and then follow Eqn.~\eqref{equ:residual-projection} to calculate the stochastic gradient noise $\mathrm{Proj}^\top_{\nabla L(\boldsymbol{w})}  \nabla L_{\mathcal{B}}(\boldsymbol{w})$.    
	We refer to this modified version as ``SAM-noise". 
As illustrated in \cref{fig:sam-db}, one can observe that SAM-noise not only restores the generalization performance of SAM but even exhibits a notable enhancement. 
This result highlights that exclusively utilizing the stochastic gradient noise component as  SAM's perturbation can further enhance its generalization ability, thereby inspiring our  F-SAM algorithm in the next section.  

\section{The F-SAM Algorithm}
In this section, we will present our proposed F-SAM algorithm. We begin by introducing the efficient estimation of the stochastic gradient noise component. We then describe the algorithmic steps of our proposed F-SAM. Finally, we formulate the loss objective of our F-SAM algorithm and highlight the distinctions from vanilla SAM.


\subsection{Estimation of Stochastic Gradient Noise Component}
From the empirical results in \cref{sec:investigation}, we know that the full gradient component $\mathrm{Proj}_{\nabla L(\boldsymbol{w})} \nabla L_{\mathcal{B}}(\boldsymbol{w})$ in the perturbation   impairs the performance of SAM, while the batch-specific stochastic gradient noise component  $\mathrm{Proj}^\top_{\nabla L(\boldsymbol{w})} \nabla L_{\mathcal{B}}(\boldsymbol{w})$ is essential to improve performance.  Accordingly, it is natural to remove the full gradient component and only use the  stochastic gradient noise component. 

To this end, we first reformulate the batch-specific stochastic gradient noise component in Eqn.~\eqref{equ:residual-projection} as 
\begin{equation}\label{equ:stochastic-noise}
	\mathrm{Proj}^\top_{\nabla L(\boldsymbol{w})} \nabla L_{\mathcal{B}}(\boldsymbol{w}) \!=\!  \nabla L_{\mathcal{B}}(\boldsymbol{w})  - \sigma \nabla L(\boldsymbol{w}).
\end{equation}
where $\sigma = \cos(\nabla L(\boldsymbol{w}), \nabla L_{\mathcal{B}}(\boldsymbol{w})).$ From Eqn.~\eqref{equ:stochastic-noise}, one can observe that to compute  $	\mathrm{Proj}^\top_{\nabla L(\boldsymbol{w})} \nabla L_{\mathcal{B}}(\boldsymbol{w})$, we need to compute the  minibatch gradient $\nabla L_{\mathcal{B}}(\boldsymbol{w})$ and also the full gradient $\nabla L(\boldsymbol{w})$ at each training  iteration. For $\nabla L_{\mathcal{B}}(\boldsymbol{w})$, it is also computed in vanilla SAM and thus does not bring extra computation overhead. Regarding the full gradient $\nabla L(\boldsymbol{w})$, it is computed on the whole dataset, and thus is  computationally prohibitive in practice.

To address this issue, we resort to estimating $\nabla L(\boldsymbol{w})$ with an exponentially moving average (EMA)  which accumulates the  historical minibatch gradients as follows:
\begin{align}
	\boldsymbol{m}_t = \lambda \boldsymbol{m}_{t-1} +  (1 - \lambda) \nabla L_{\mathcal{B}_t}(\boldsymbol{w}_t),
\end{align}
where $\lambda>0$  is a hyper-parameter.  This approach enables us to approximate the full gradient $\nabla L(\boldsymbol{w}_t)$ with minimal additional computational overhead. We also prove that $	\boldsymbol{m}_t $ is a good estimation to the full gradient  $\nabla L(\boldsymbol{w}_t)$ as shown in following Theorem~\ref{theoremestimation} with its proof in Appendix~A1.




\begin{theorem}\label{theoremestimation}
Suppose Assumption \ref{assumption1}, \ref{assumption2}, and \ref{assumption3} hold. Assume that SAM uses SGD as the base optimizer with a learning rate $\gamma$ to update the model parameter in Eqn.~\eqref{eqn:gradient}.  Then by setting $\lambda=1-C\gamma^{2/3}$, after $T > C^\prime \gamma^{-2/3}$  training iterations, with probability $1-\delta$, we have  
\begin{equation*}
    \Phi_T = \|\boldsymbol{m}_T - \nabla L(\boldsymbol w_T)\|_2  \leq  \mathcal{O}\big(\gamma^{\frac{1}{3}} \beta^{\frac{1}{3}} G^{\frac{1}{3}} M^{\frac{1}{3}}  \log\big(\frac{1}{\delta} \big) \big),
\end{equation*}
where $C$ and $C'$ are two universal constants.


\end{theorem}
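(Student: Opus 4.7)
The plan is to unroll the EMA recursion and split the error $\boldsymbol{m}_T-\nabla L(\boldsymbol{w}_T)$ into three contributions — an initialization bias, a stochastic-gradient noise term, and a drift term caused by the movement of the iterates — then balance the noise and drift by tuning $1-\lambda = C\gamma^{2/3}$. Concretely, unrolling gives
\begin{equation*}
\boldsymbol{m}_T = \lambda^T\boldsymbol{m}_0 + \sum_{t=1}^T (1-\lambda)\lambda^{T-t}\nabla L_{\mathcal{B}_t}(\boldsymbol{w}_t),
\end{equation*}
and using $\sum_{t=1}^T(1-\lambda)\lambda^{T-t}=1-\lambda^T$ we can write
\begin{align*}
\boldsymbol{m}_T-\nabla L(\boldsymbol{w}_T)
&= \lambda^T\bigl(\boldsymbol{m}_0-\nabla L(\boldsymbol{w}_T)\bigr) \\
&\quad + \sum_{t=1}^T (1-\lambda)\lambda^{T-t}\bigl[\nabla L_{\mathcal{B}_t}(\boldsymbol{w}_t)-\nabla L(\boldsymbol{w}_t)\bigr] \\
&\quad + \sum_{t=1}^T (1-\lambda)\lambda^{T-t}\bigl[\nabla L(\boldsymbol{w}_t)-\nabla L(\boldsymbol{w}_T)\bigr].
\end{align*}

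Next I would bound the three pieces separately. For the drift term, $\beta$-smoothness (Assumption~\ref{assumption1}) gives $\|\nabla L(\boldsymbol{w}_t)-\nabla L(\boldsymbol{w}_T)\|_2\leq \beta\sum_{s=t}^{T-1}\|\boldsymbol{w}_{s+1}-\boldsymbol{w}_s\|_2$; each SGD-type update moves the iterate by at most $\gamma\|\nabla L_{\mathcal{B}_s}(\boldsymbol{w}_s+\boldsymbol{\epsilon}_s)\|_2$, whose expectation is at most $\gamma G$ by Assumption~\ref{assumption3}. Summing and then telescoping against the EMA weights, I use the geometric identity $\sum_{t=1}^T(1-\lambda)\lambda^{T-t}(T-t)=O\!\bigl(1/(1-\lambda)\bigr)$ to obtain a drift bound of order $\beta\gamma G/(1-\lambda)$. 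For the noise term, $\{\nabla L_{\mathcal{B}_t}(\boldsymbol{w}_t)-\nabla L(\boldsymbol{w}_t)\}$ is a martingale-difference sequence in $\mathbb{R}^d$ with conditional second moment at most $M$ by Assumption~\ref{assumption2}; a vector-valued Freedman/Bernstein inequality applied to the weighted martingale yields, with probability $1-\delta$, a deviation of order $\sqrt{M\sum_t(1-\lambda)^2\lambda^{2(T-t)}}\,\log(1/\delta)=O\!\bigl(\sqrt{(1-\lambda)M}\log(1/\delta)\bigr)$, since the quadratic variation is $(1-\lambda)(1-\lambda^{2T})/(1+\lambda)$.

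Finally, the initialization bias is $\lambda^T\|\boldsymbol{m}_0-\nabla L(\boldsymbol{w}_T)\|_2\leq 2G\lambda^T\leq 2G\exp(-C T\gamma^{2/3})$, so choosing $T>C'\gamma^{-2/3}$ for a suitable constant $C'$ makes it negligible relative to the other two. Combining everything gives a total error of $O\bigl(\beta\gamma G/(1-\lambda)+\sqrt{(1-\lambda)M}\log(1/\delta)\bigr)$. Setting $1-\lambda=C\gamma^{2/3}$ the drift becomes $O(\beta G\gamma^{1/3}/C)$ and the noise $O(\sqrt{C}\gamma^{1/3}\sqrt{M}\log(1/\delta))$; equating them through $C\propto(\beta G)^{2/3}/M^{1/3}$ yields exactly $O\bigl(\gamma^{1/3}\beta^{1/3}G^{1/3}M^{1/3}\log(1/\delta)\bigr)$, as claimed.

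The main obstacle is the high-probability control of the weighted martingale in $\mathbb{R}^d$: a naive coordinatewise Azuma would introduce a $\sqrt{d}$ factor that is absent from the stated bound. I would therefore invoke a dimension-free vector Bernstein/Freedman inequality and carefully verify that the per-step variance proxy inherited from Assumption~\ref{assumption2} propagates correctly through the geometric EMA weights, so that the logarithmic dependence $\log(1/\delta)$ is achieved without any hidden polynomial dependence on the ambient dimension or on $T$.
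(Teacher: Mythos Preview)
Your proposal is correct and follows essentially the same route as the paper: decompose $\boldsymbol{m}_T-\nabla L(\boldsymbol{w}_T)$ into a drift (bias) term controlled by $\beta$-smoothness and the per-step displacement $\gamma G$, and a noise term controlled by a dimension-free vector Bernstein inequality applied to the geometrically weighted gradient noise, then balance the two by choosing $1-\lambda\asymp\gamma^{2/3}$. The paper works with the normalized weights $w_t=\lambda^{T-t}/\sum_s\lambda^{T-s}$ (so the initialization contribution is absorbed into a multiplicative $1/(1-\lambda^T)$ factor, removed by the hypothesis $T\gtrsim\gamma^{-2/3}$) rather than separating out $\lambda^T\boldsymbol{m}_0$ explicitly as you do, but the two bookkeepings are equivalent; your observation that the noise increments form a martingale-difference sequence (so that a Freedman-type inequality is really what is needed, rather than the independent-sample Bernstein the paper quotes) is in fact a refinement of the paper's argument.
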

Theorem~\ref{theoremestimation} shows that  
the error bound  $ \Phi_t$ between   the full gradient $\nabla L(\boldsymbol{w}_t)$ and its EMA estimation $\boldsymbol{m}_t $ is at the order of $\mathcal{O}(\gamma^{1/3})$. {On the non-convex problem, learning rate  $\gamma$ is often set as $\mathcal{O}(1/\sqrt{T})$ to ensure convergence as shown in our Theorem~\ref{thm:convergence} in Section~\ref{convergence}. } This implies that the error bound  $ \Phi_t$ is $\mathcal{O}(T^{-1/6})$ and thus is small since the training iteration $T$ is often large.  So $	\boldsymbol{m}_t $ is a good estimation the full gradient  $\nabla L(\boldsymbol{w}_t)$.

In this way, the  stochastic gradient noise component in Eqn.~\eqref{equ:residual-projection} is approximated as  
 \begin{equation}\label{adasfwaasde}
 	\begin{split}
 	\mathrm{Proj}^\top_{\nabla L(\boldsymbol{w}_t)} \nabla L_{\mathcal{B}}(\boldsymbol{w}_t)\approx   & 	\widetilde{\mathrm{Proj}}^\top_{\nabla L(\boldsymbol{w}_t)} \nabla L_{\mathcal{B}}(\boldsymbol{w}_t), \\
 	\end{split}
 \end{equation}
 where $\widetilde{\mathrm{Proj}}^\top_{\nabla L(\boldsymbol{w}_t)} \nabla L_{\mathcal{B}}(\boldsymbol{w}_t) = \nabla L_{\mathcal{B}}(\boldsymbol{w}_t)  - \sigma_t \boldsymbol{m}_t$ and  $\sigma_t = \cos(\boldsymbol{m}_t, \nabla L_{\mathcal{B}}(\boldsymbol{w}_t)).$  
In the real network  training,  the training frameworks, e.g., PyTorch, often  uses propagation to update $\boldsymbol{m}_t$ and $\nabla L_{\mathcal{B}}(\boldsymbol{w}_t)$ from layer to layer. So one needs to wait until the training framework finishes the updating of all layers, and then compute $\sigma_t $ which is not efficient, especially for modern over-parameterized networks. Moreover,  both $\boldsymbol{m}_t$ and $\nabla L_{\mathcal{B}}(\boldsymbol{w}_t)$ are high dimensional,  their  cosine computation can be unstable due to the possible noises in the approximation $\boldsymbol{m}_t$. 
So to improve the training efficiency and stability,  we set $\sigma_t$ as a constant $\sigma$, and obtain  \begin{equation}\label{adasfwaasadade}
 	\begin{split}
\widetilde{\mathrm{Proj}}^\top_{\nabla L(\boldsymbol{w}_t)} \nabla L_{\mathcal{B}}(\boldsymbol{w}_t)  
=   \nabla L_{\mathcal{B}}(\boldsymbol{w}_t)  - \sigma \boldsymbol{m}_t.
 	\end{split}
 \end{equation}
 The experimental results in Section \ref{sec:experiments} show that this practical setting achieves satisfactory performance.  In theory, we also prove that this practical setting is validated as shown in the following Lemma~\ref{theorem2} with its proof in Appendix~A2.
  \begin{lemma}\label{theorem2}
 	Assuming $\boldsymbol{m}_t$ is a unbiased estimator to the full gradient $\nabla L(\boldsymbol{w}_t)$. With $\sigma=1$, we have
 	\begin{align}
 		\begin{split}
 			\mathbb{E} \left \langle \widetilde{\mathrm{Proj}}^\top_{\nabla L(\boldsymbol{w}_t)} \nabla L_{\mathcal{B}}(\boldsymbol{w}_t), \nabla L(\boldsymbol{w}_t) \right \rangle=0.
 		\end{split}
 	\end{align}
 \end{lemma}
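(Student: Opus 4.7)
The plan is to exploit the linearity of expectation and the unbiasedness assumptions to make the two terms in the inner product cancel exactly.

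First, I would substitute the definition from Eqn.~\eqref{adasfwaasadade} with $\sigma=1$, which gives
\begin{equation*}
\widetilde{\mathrm{Proj}}^\top_{\nabla L(\boldsymbol{w}_t)} \nabla L_{\mathcal{B}}(\boldsymbol{w}_t) = \nabla L_{\mathcal{B}}(\boldsymbol{w}_t) - \boldsymbol{m}_t.
\end{equation*}
Plugging this into the inner product and using linearity, the quantity to bound becomes
\begin{equation*}
\mathbb{E}\bigl\langle \nabla L_{\mathcal{B}}(\boldsymbol{w}_t), \nabla L(\boldsymbol{w}_t)\bigr\rangle - \mathbb{E}\bigl\langle \boldsymbol{m}_t, \nabla L(\boldsymbol{w}_t)\bigr\rangle.
\end{equation*}

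Next, I would fix $\boldsymbol{w}_t$ (i.e., condition on it), so that $\nabla L(\boldsymbol{w}_t)$ becomes a deterministic vector that can be pulled out of the inner expectation. The randomness remaining is over the current minibatch $\mathcal{B}$ for the first term and over whatever history is used to form $\boldsymbol{m}_t$ for the second. Since $\mathcal{B}$ is drawn so that $\mathbb{E}[\nabla L_{\mathcal{B}}(\boldsymbol{w}_t)\mid \boldsymbol{w}_t] = \nabla L(\boldsymbol{w}_t)$, and by the lemma's hypothesis $\mathbb{E}[\boldsymbol{m}_t\mid \boldsymbol{w}_t] = \nabla L(\boldsymbol{w}_t)$, each term collapses to $\|\nabla L(\boldsymbol{w}_t)\|_2^2$. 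Taking outer expectation over $\boldsymbol{w}_t$ preserves the equality, so the difference vanishes.

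The only subtlety — and really the only thing worth spelling out carefully — is the conditioning structure: Theorem~\ref{theoremestimation} only shows that $\boldsymbol{m}_t$ approximates $\nabla L(\boldsymbol{w}_t)$ with a residual of order $\mathcal{O}(\gamma^{1/3})$, so the lemma's hypothesis of exact unbiasedness is an idealization. I would therefore state the tower-property argument explicitly (conditional on $\boldsymbol{w}_t$, both $\nabla L_{\mathcal{B}}(\boldsymbol{w}_t)$ and $\boldsymbol{m}_t$ have mean $\nabla L(\boldsymbol{w}_t)$) and note that under this idealization the cancellation is exact, which justifies the choice $\sigma=1$ in Eqn.~\eqref{adasfwaasadade} as removing, in expectation, the full-gradient component along $\nabla L(\boldsymbol{w}_t)$. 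There is no hard technical obstacle; the proof is a two-line computation and its value is conceptual, confirming that the practical constant $\sigma=1$ still yields an unbiased projection onto the subspace orthogonal to the full gradient.
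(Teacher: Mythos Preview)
Your proposal is correct and follows essentially the same two-line computation as the paper: substitute the definition $\nabla L_{\mathcal{B}}(\boldsymbol{w}_t) - \sigma \boldsymbol{m}_t$, use linearity together with the unbiasedness of both $\nabla L_{\mathcal{B}}(\boldsymbol{w}_t)$ and $\boldsymbol{m}_t$ to reduce the expectation to $(1-\sigma)\|\nabla L(\boldsymbol{w}_t)\|^2$, and conclude by setting $\sigma=1$. The only cosmetic difference is that the paper keeps $\sigma$ symbolic until the last step whereas you specialize to $\sigma=1$ at the outset; your added remarks on conditioning and the tower property are sound but not needed beyond what the paper writes.
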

Lemma~\ref{theorem2} shows that $\widetilde{\mathrm{Proj}}^\top_{\nabla L(\boldsymbol{w}_t)} \nabla L_{\mathcal{B}}(\boldsymbol{w}_t)$ is orthogonal to the full gradient $\nabla L(\boldsymbol{w}_t)$ in expectation when  $\sigma=1$. So using $\widetilde{\mathrm{Proj}}^\top_{\nabla L(\boldsymbol{w}_t)} \nabla L_{\mathcal{B}}(\boldsymbol{w}_t)$  as the perturbation direction can remove the effects of the full gradient $\nabla L(\boldsymbol{w}_t)$. As Theorem~\ref{theoremestimation} shows that  $\boldsymbol{m}_t$ is good estimation to    full gradient $\nabla L(\boldsymbol{w}_t)$,   the assumption in  Lemma~\ref{theorem2} is not restrictive.

\subsection{Algorithmic Steps of F-SAM}
After estimating the stochastic gradient noise component in Eqn.~\eqref{adasfwaasadade}, one can use it to replace the adversarial perturbation direction in vanilla SAM, and straightforwardly develop our proposed F-SAM.  Specifically, for the $t$-th training iteration, following the algorithmic steps in vanilla SAM, we first define the perturbation $\boldsymbol{\epsilon}$ as follows:
\begin{align}
\boldsymbol{\epsilon}_t =  \rho \cdot\norm \big(\widetilde{\mathrm{Proj}}^\top_{\nabla L(\boldsymbol{w}_t)} \nabla L_{\mathcal{B}}(\boldsymbol{w}_t)\big),
	\label{equ:dts}
\end{align}
where $\rho$ is the  radius.  
Then, following SAM, we use SGD as its base optimizer, and updates the model parameter $\boldsymbol{w}_t$ by using Eqn.~\eqref{eqn:gradient} in Section~\ref{sec:preliminary}.  We summarize the algorithmic steps of F-SAM with SGD as base optimizer  in  Algorithm
\ref{algorithm:fsam}. One can also use other base optimizers, e.g., AdamW, to update the model parameters in Eqn.~\eqref{eqn:gradient}. 

\begin{algorithm}[t]
	\caption{F-SAM algorithm}
	\label{algorithm:fsam}
	\KwIn{Loss function $L(\boldsymbol{w})$,  training datasets $\mathcal{S}=\{ (\boldsymbol{x}_i,\boldsymbol{y}_i) \}_{i=1}^n$, minibatch size $b$, neighborhood size $\rho$, learning rate $\gamma$, momentum factor $\lambda$, projection constant $\sigma$} 
	\KwOut{Trained weight $\boldsymbol{w}$}  
	\BlankLine
	Initialize $\boldsymbol{w}_0$, $t\gets 0$, $\boldsymbol{m}_{-1} = \boldsymbol{0}$;\\ 
	\While{not converged}{
		Sample a minibatch data $\mathcal{B}$ of size $b$ from $\mathcal{S}$; \\
		$\boldsymbol{g}_t=\nabla L_{\mathcal{B}}(\boldsymbol{w}_{t})$;\\
		$\boldsymbol{m}_{t}=\lambda \boldsymbol{m}_{t-1}+(1-\lambda)\boldsymbol{g}_t$;\\
		Compute adversarial adversarial perturbation: \\ \quad \quad $  \boldsymbol{\epsilon}_t = \rho \frac{\boldsymbol{\mathrm{d}}_t}{\|\boldsymbol{\mathrm{d}}_t\|}$ where $\boldsymbol{\mathrm{d}}_t=\boldsymbol{g}_t-\sigma \boldsymbol{m}_{t}$; \\
		Compute the gradient approximation  $\boldsymbol{g}$:\\
		\quad \quad $\boldsymbol{g}_t' \gets \nabla L_{\mathcal{B}}(\boldsymbol{w}_{t} + \boldsymbol{\epsilon}_t)$;
		\\
		Update $\boldsymbol{w}$ using gradient descent:\\
		\quad \quad $\boldsymbol{w}_{t+1}\gets \boldsymbol{w}_{t}-\gamma \boldsymbol{g}_t'$; \\
		$t \gets t + 1$; \\
	}
	\Return $\boldsymbol{w}_t$.
\end{algorithm}

\subsection{A Friendly Perspective of the Loss Objective}
Here we provide an intuitive understanding on the difference between SAM and F-SAM. For a minibatch data $\mathcal{B}$, SAM solves the  maximization problem to seek adversarial perturbation: $\boldsymbol{\epsilon}_s^{\rm SAM}= \mathop{ \arg \max}_{\| \boldsymbol{\epsilon} \|_2 \le \rho} L_{\mathcal{B}}(\boldsymbol{w}+\boldsymbol{\epsilon}).$
%
In contrast,   by observing $\widetilde{\mathrm{Proj}}^\top_{\nabla L(\boldsymbol{w}_t)} \nabla L_{\mathcal{B}}(\boldsymbol{w}_t)  $   in Eqn.~\eqref{adasfwaasadade}, F-SAM indeed computes the adversarial perturbation by maximizing the current minibatch loss while  minimizing the loss on the entire dataset:
\begin{align*}
	{\boldsymbol{\epsilon}}^{\rm F \raisebox{0mm}{-}SAM}_s=\mathop{ \arg \max}\limits_{\| \boldsymbol{\epsilon} \|_2 \le \rho} L_{\mathcal{B}} (\boldsymbol{w}+\boldsymbol{\epsilon})- \sigma L_{\mathcal{D}} (\boldsymbol{w}+\boldsymbol{\epsilon}).
\end{align*}
Then combining the following minimization problem, one can write F-SAM's bi-level optimization problem as
\begin{align*}
\begin{split}
	\min_{\boldsymbol{w}} &~~~\mathbb{E}_{\mathcal{B}} \left [ L_{\mathcal{B}}(\boldsymbol{w}+\boldsymbol{\epsilon}_s)\right ] \\
	\mathrm{s.t.}  &~~~{\boldsymbol{\epsilon}}_s=\mathop{ \arg \max}\nolimits_{\| \boldsymbol{\epsilon} \|_2 \le \rho} L_{\mathcal{B}} (\boldsymbol{w}+\boldsymbol{\epsilon})-\sigma L_{\mathcal{D}} (\boldsymbol{w}+\boldsymbol{\epsilon}).
\end{split}
\end{align*}
The insight behind is that F-SAM aims to find an adversarial perturbation ${\boldsymbol{\epsilon}}_s$ that increases the loss sharpness of the current minibatch data while minimizing the impact on the loss sharpness of the other data points as much as possible. Then F-SAM minimizes the loss value and sharpness of current minibatch data. So the adversarial perturbation in F-SAM is  ``friendly'' to other data points, yielding consistent sharpness minimization.  Moreover, this friendliness enables high robustness of F-SAM  to the perturbation radius as shown in \cref{sec:ablation}.   But for vanilla SAM, its adversarial perturbation increases the loss sharpness of current minibatch data and other data points but minimizes the loss value and sharpness of only current minibatch data. So in vanilla SAM, the second minimization step indeed does not consider the loss sharpness increment caused by its first adversarial perturbation step. This inconsistency may impair the sharpness optimization and thus limit the generalization.

\subsection{Convergence Analysis}\label{convergence}
In this section, we analyze the convergence properties of the F-SAM algorithm under non-convex setting. 
We follow basic assumptions that are standard in convergence analysis of stochastic optimization \cite{ghadimi2013stochastic,karimi2016linear,andriushchenko2022towards}.
\begin{theorem}
	\label{thm:convergence}
Assume Assumption \ref{assumption1} and \ref{assumption2} hold.  Assume that SAM uses SGD as the base optimizer with a learning rate $\gamma$ to update the model parameter in Eqn.~\eqref{eqn:gradient}.   By setting learning rate $\gamma=\frac{\gamma_0}{\sqrt{T}} \le {1}/{\beta}$ and the perturbation radius  $\rho_t=\frac{\rho_0}{\sqrt{t}}$, we have 
	\begin{align}
		\begin{split}
\frac{1}{{T}} \sum_{t=1}^{T}  \mathbb{E}   \|\nabla L(\boldsymbol{w}_{t}) \|^2\le
\frac{2 \Delta}{\gamma_0 \sqrt{T}}+\frac{\Theta}{\sqrt{T}}+\frac{\Pi \log T}{\sqrt{T}},
		\end{split}
	\end{align}
where $\Delta =  \mathbb{E} [L(\boldsymbol{w}_{0}) -L(\boldsymbol{w}^{*})  ]$	with the optimal solution $\boldsymbol{w}^*$  to $L(\boldsymbol{w})$, $\Theta=2\beta M \gamma_0+ \rho_0^2 \beta^3\gamma_0$,  and $\Pi = \rho_0^2 \beta^2$.
\end{theorem}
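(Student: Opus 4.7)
The plan is to adapt the standard non-convex descent-lemma argument for SAM-style methods, exploiting the fact that the F-SAM perturbation satisfies $\|\boldsymbol{\epsilon}_t\|=\rho_t$ by construction (deterministically, because of the normalization in Algorithm~\ref{algorithm:fsam}). This means the specific direction $\boldsymbol{d}_t=\boldsymbol{g}_t-\sigma\boldsymbol{m}_t$ and any dependence of $\boldsymbol{\epsilon}_t$ on past iterates enters the analysis only through this norm bound, so the proof structure mirrors vanilla SAM's convergence proof.

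First I would invoke Assumption~\ref{assumption1} to write the descent inequality
\begin{align*}
L(\boldsymbol{w}_{t+1}) \le L(\boldsymbol{w}_t) + \langle \nabla L(\boldsymbol{w}_t), \boldsymbol{w}_{t+1}-\boldsymbol{w}_t\rangle + \tfrac{\beta}{2}\|\boldsymbol{w}_{t+1}-\boldsymbol{w}_t\|^2,
\end{align*}
and substitute $\boldsymbol{w}_{t+1}-\boldsymbol{w}_t=-\gamma \nabla L_{\mathcal{B}}(\boldsymbol{w}_t+\boldsymbol{\epsilon}_t)$. The cross term I would handle by decomposing $\nabla L_{\mathcal{B}}(\boldsymbol{w}_t+\boldsymbol{\epsilon}_t)=\nabla L_{\mathcal{B}}(\boldsymbol{w}_t)+[\nabla L_{\mathcal{B}}(\boldsymbol{w}_t+\boldsymbol{\epsilon}_t)-\nabla L_{\mathcal{B}}(\boldsymbol{w}_t)]$, taking expectation over $\mathcal{B}$ so that $\mathbb{E}\nabla L_{\mathcal{B}}(\boldsymbol{w}_t)=\nabla L(\boldsymbol{w}_t)$, and bounding the remaining per-sample difference by $\beta\rho_t$ via Assumption~\ref{assumption1}. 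A Young inequality $\gamma\beta\rho_t\|\nabla L(\boldsymbol{w}_t)\|\le\tfrac{\gamma}{4}\|\nabla L(\boldsymbol{w}_t)\|^2+\gamma\beta^2\rho_t^2$ turns this into a usable form. For the quadratic term I would apply $\|a+b\|^2\le 2\|a\|^2+2\|b\|^2$ and Assumption~\ref{assumption2} to obtain $\mathbb{E}\|\nabla L_{\mathcal{B}}(\boldsymbol{w}_t+\boldsymbol{\epsilon}_t)\|^2\le 2\beta^2\rho_t^2+2M+2\|\nabla L(\boldsymbol{w}_t)\|^2$.

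Combining yields a per-step bound of the form
\begin{align*}
\mathbb{E}[L(\boldsymbol{w}_{t+1})-L(\boldsymbol{w}_t)] \le -\bigl(\tfrac{3\gamma}{4}-\beta\gamma^2\bigr)\mathbb{E}\|\nabla L(\boldsymbol{w}_t)\|^2 + \gamma\beta^2\rho_t^2 + \beta^3\gamma^2\rho_t^2 + \beta\gamma^2 M.
\end{align*}
Under the step-size condition $\gamma\le 1/\beta$ (absorbing constants as needed), the coefficient of $\mathbb{E}\|\nabla L(\boldsymbol{w}_t)\|^2$ is at least $\gamma/2$. Telescoping $\sum_{t=1}^T\mathbb{E}[L(\boldsymbol{w}_{t+1})-L(\boldsymbol{w}_t)]\ge -\Delta$ and using $\sum_{t=1}^T\rho_t^2=\rho_0^2\sum_{t=1}^T 1/t\le\rho_0^2(1+\log T)$, I would obtain
\begin{align*}
\tfrac{\gamma}{2}\sum_{t=1}^T\mathbb{E}\|\nabla L(\boldsymbol{w}_t)\|^2 \le \Delta + \gamma\beta^2\rho_0^2(1+\log T) + \beta^3\gamma^2\rho_0^2(1+\log T) + T\beta\gamma^2 M.
\end{align*}
Dividing by $\gamma T/2$ and inserting $\gamma=\gamma_0/\sqrt{T}$ produces the three terms in the theorem: $2\Delta/(\gamma_0\sqrt{T})$ from the gap, $2\beta M\gamma_0/\sqrt{T}$ from the variance, $\rho_0^2\beta^3\gamma_0/\sqrt{T}$ from the $\beta^3\gamma^2\rho_t^2$ contribution (the residual $\log T/T^{3/2}$ piece being subsumed), and $\rho_0^2\beta^2\log T/\sqrt{T}$ from the perturbation-cross term.

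The main conceptual obstacle is that $\boldsymbol{\epsilon}_t$ is not independent of $\mathcal{B}$ nor of the EMA $\boldsymbol{m}_t$, so one cannot simply take an unconditional expectation inside $\nabla L_{\mathcal{B}}(\boldsymbol{w}_t+\boldsymbol{\epsilon}_t)$. The resolution, as noted above, is that Assumption~\ref{assumption1} applied \emph{per-sample} allows the shift from $\boldsymbol{w}_t+\boldsymbol{\epsilon}_t$ back to $\boldsymbol{w}_t$ using only the deterministic bound $\|\boldsymbol{\epsilon}_t\|\le\rho_t$, bypassing any need to characterize the conditional distribution of the normalized F-SAM direction. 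With this in place, the argument is routine bookkeeping; the only other mildly delicate point is verifying that the constant in $\gamma\le 1/\beta$ suffices to make the coefficient of $\mathbb{E}\|\nabla L(\boldsymbol{w}_t)\|^2$ negative, which can always be arranged by redistributing constants in the Young inequality.
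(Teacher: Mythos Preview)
Your proposal is correct and follows essentially the same route as the paper's own proof: descent lemma from $\beta$-smoothness, decomposition of $\nabla L_{\mathcal{B}}(\boldsymbol{w}_t+\boldsymbol{\epsilon}_t)$ by shifting back to $\boldsymbol{w}_t$, use of the deterministic bound $\|\boldsymbol{\epsilon}_t\|=\rho_t$ to absorb the perturbation via smoothness, and telescoping with $\sum_t 1/t\le 1+\log T$. The only cosmetic difference is that the paper expands the quadratic term via the identity $\|a\|^2=\|a-b\|^2-\|b\|^2+2b^\top a$ (with $b=\nabla L(\boldsymbol{w}_t)$), which yields the coefficient $\gamma_t/2$ on $\mathbb{E}\|\nabla L(\boldsymbol{w}_t)\|^2$ exactly, whereas your direct Young/triangle bounds give slightly looser constants that would, as you note, require redistributing the Young split or a marginally stronger step-size constant; the resulting rate and the three terms $\Delta,\Theta,\Pi$ are the same.
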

\noindent
See the proof   in Appendix~A3.
For non-convex stochastic optimization, Theorem \ref{thm:convergence} shows that F-SAM has the convergence rate   $\mathcal{O}(\log T/\sqrt{T})$ and share the same convergence speed as  SAM. But F-SAM enjoys better generalization performance than SAM as shown in Section~\ref{sec:experiments}. 

\begin{table*}[!t]
	\centering
	\begin{tabular}{c|ccc|ccc}
		\toprule
		CIFAR-10 & SGD   &ASAM &FisherSAM &SAM &F-SAM (ours) \\
		\midrule
		\textbf{VGG16-BN} &94.96$_{\pm 0.15}$   &95.57$_{\pm 0.05}$  &95.55$_{\pm 0.08}$ &95.42$_{\pm 0.05}$ &\textbf{95.62}$_{\pm 0.11}$\\
		\textbf{ResNet-18} &96.25$_{\pm 0.06}$  &96.63$_{\pm 0.15}$ &96.72$_{\pm 0.03}$ &96.58$_{\pm 0.10}$ &\textbf{96.75}$_{\pm 0.09}$\\
		\textbf{WRN-28-10} &97.08$_{\pm 0.16}$  &97.64$_{\pm 0.13}$ &97.46$_{\pm 0.18}$ &97.32$_{\pm 0.11}$ &\textbf{97.53}$_{\pm 0.11}$  \\
		\textbf{PyramidNet-110} &97.63$_{\pm 0.09}$  &97.82$_{\pm 0.07}$ &97.64$_{\pm 0.09}$ &97.70$_{\pm 0.10}$ &\textbf{97.84}$_{\pm 0.05}$ \\
		\bottomrule
	\end{tabular}
 \vspace{-2mm}
	\caption{Test accuracy (\%) comparison of various neural networks on CIFAR-10.}
	\label{tab:cifar10}
 \vspace{-3mm}
\end{table*}

\begin{table*}[!t]
	\centering
	\begin{tabular}{c|ccc|cc}
		\toprule
		CIFAR-100 & SGD &ASAM &FisherSAM  &SAM &F-SAM (ours) \\
		\midrule
		\textbf{VGG16-BN} &75.43$_{\pm 0.19}$  &76.27$_{\pm 0.35}$ &76.90$_{\pm 0.37}$ &76.63$_{\pm 0.20}$ &\textbf{77.08}$_{\pm 0.17}$\\
		\textbf{ResNet-18} &77.90$_{\pm 0.07}$  &{81.68}$_{\pm 0.12}$ &80.99$_{\pm 0.13}$ &80.88$_{\pm 0.10}$ &\textbf{81.29}$_{\pm 0.12}$ \\
		\textbf{WRN-28-10} &81.71$_{\pm 0.13}$  &84.99$_{\pm 0.22}$ &84.91$_{\pm 0.07}$ &84.88$_{\pm 0.10}$ &\textbf{85.16}$_{\pm 0.07}$ \\
		\textbf{PyramidNet-110} &84.65$_{\pm 0.11}$  &86.47$_{\pm 0.09}$ &86.53$_{\pm 0.07}$ &86.55$_{\pm 0.08}$ &\textbf{86.70}$_{\pm 0.14}$ \\
		\bottomrule
	\end{tabular}
 \vspace{-2mm}
	\caption{Test accuracy (\%) comparison of various neural networks on CIFAR-100.}
	\label{tab:cifar100}
 \vspace{-4mm}
\end{table*}

\section{Numerical Experiments}
\label{sec:experiments}
Here we test F-SAM on   various tasks and  network architectures under  two popular settings: 1) training from scratch, and 2)  transfer learning  by fine-tuning pretrained models. Moreover,   ablation study and additional experiments show more insights on F-SAM.

\subsection{Training From Scratch}
\label{sec:trainingfromscratch}


\textbf{~~~CIFAR.}
We first evaluate over CIFAR-10/100 datasets \cite{krizhevsky2009learning}. The network architectures we considered include VGG16-BN \cite{simonyan2014very}, ResNet-18 \cite{he2016deep}, WRN-28-10 \cite{ZagoruykoK16} and PyramidNet-110 \cite{han2017deep}. 
The first three models are trained for 200 epochs while for PyramidNet-110 we train for 300 epochs. 
We set the initial learning rate as 0.05 with a cosine learning rate schedule. The momentum and weight decay are set to 0.9 and 0.0005 for SGD, respectively. SAM and variants adopt the same setting except that the weight decay is set to 0.001 following \cite{mi2022make,li2023enhancing}.
We apply standard random horizontal flipping, cropping, normalization, and cutout augmentation \cite{devries2017improved}. For SAM, we set the perturbation radius $\rho$ as 0.1 and 0.2 for CIFAR-10 and CIFAR-100 \cite{mi2022make,li2023enhancing}. For ASAM, we adopt the recommend $\rho$ as 2 and 4 for CIFAR-10 and CIFAR-100 \cite{kwon2021asam}. For FisherSAM, we adopt the recommended $\rho=0.1$. For F-SAM, we use the same $\rho$ as SAM, and tune $\lambda=\{ 0.6, 0.9, 0.95 \}$ although all parameters outperforms SAM. We find that $\lambda=0.9$ works best for WRN-28-10 and PyramidNet-110, while $\lambda=0.6$ achieves the best for others. We simply set $\sigma=1$ for all the following experiments. Experiments are repeated over 3 independent trials.

\begin{table}[htbp]
    \centering
    \small
    \vspace{-3mm}
    \begin{tabular}{c|cc}
    \toprule
         CIFAR-10 &ASAM &F-ASAM (ours) \\
         \midrule
         \textbf{ResNet-18} &96.63$_{\pm 0.15}$ &\textbf{96.77}$_{\pm 0.11}$\\
         \textbf{WRN-28-10} &97.64$_{\pm 0.13}$ &\textbf{97.73}$_{\pm 0.04}$\\
        \midrule
         CIFAR-100 &ASAM &F-ASAM  (ours)  \\
         \midrule
         \textbf{ResNet-18} &81.68$_{\pm 0.12}$ &\textbf{81.82}$_{\pm 0.14}$ \\
         \textbf{WRN-28-10} &84.99$_{\pm 0.22}$ &\textbf{85.24}$_{\pm 0.16}$\\
    \bottomrule
    \end{tabular}
 \vspace{-3mm}
    \caption{Results on integration with ASAM.}
    \label{tab:adaptive}
 \vspace{-3mm}
\end{table}


In \cref{tab:cifar10} and \ref{tab:cifar100}, we observe that F-SAM achieves 0.1 to 0.2 accuracy gain on CIFAR-10 and 0.2 to 0.4 on CIFAR-100 in all tested scenarios. Moreover, F-SAM outperforms SAM's adaptive variants in most cases, and in \cref{tab:adaptive}, we integrate our friendly approach to ASAM (see Appendix~A4) and again observe consistent improvement.

\begin{table}[!b]
	\centering
	\small
	\setlength{\tabcolsep}{3.5pt}
    \vspace{-3mm}
	\begin{tabular}{c|cccc}
		\toprule
		ImageNet &SGD &ASAM &SAM  &F-SAM \\
		\midrule
		ResNet-50 &76.62$_{\pm 0.12}$ &77.10$_{\pm 0.14}$ &77.14$_{\pm 0.16}$  &\textbf{77.35}$_{\pm 0.12}$\\
		\bottomrule
	\end{tabular}
\vspace{-3mm}
	\caption{Test accuracy (\%) comparison on ImageNet.}
\vspace{-3mm}
	\label{tab:imagenet}
\end{table}

\textbf{ImageNet.}
Next, we investigate the performance of F-SAM on larger scale datasets by training ResNet-50 \cite{he2016deep} on ImageNet \cite{deng2009imagenet} from scratch. We set the training epochs to 90 with batch size 128, weight decay 0.0001, and momentum 0.9. We set the initial learning rate as 0.05 with a cosine schedule \cite{li2023enhancing}. We apply basic data preprocessing and augmentation \cite{paszke2017automatic}. 
We set $\rho=0.075$ for all three SAM and variants and $\lambda=0.95$ for F-SAM.
In \cref{tab:imagenet}, we observe that F-SAM outperforms SAM and ASAM and offers an accuracy gain of 0.21 over SAM. This validates the effectiveness of F-SAM on large-scale problems.

\subsection{Transfer Learning}
\label{sec:transfer}


One of the fascinating training pipelines for modern DNNs is transfer learning, i.e., first training a model on a large datasets and then easily and quickly adapting to novel target datasets by fine-tuning. In this subsection, we evaluate the performance of transfer learning for F-SAM. We use a Deit-small model \cite{deit} pre-trained on ImageNet (with public available checkpoints\footnote{\url{https://github.com/facebookresearch/deit}}). We use AdamW \cite{loshchilov2017decoupled} as base optimizer and train the model for 10 epochs with batch size $128$, weight decay $10^{-5}$ and initial learning rate of $10^{-4}$. 
We adopt $\rho=0.075$ for SAM and F-SAM.
The results are shown in \cref{tab:finetuning}. We observe that F-SAM consistently outperforms SAM and AdamW by an accuracy gain of 0.1 to 0.7. 
An intriguing observation is that on small Flowers102 dataset \cite{nilsback2008automated} (with 1020 images), AdamW even outperforms SAM. 
We attribute this to the fact that when the dataset is small, the full gradient component within the minibatch gradient is more prominent, and its detrimental  impact on convergence and generalization becomes more evident.



\begin{table}[htbp]
    \centering
    \small
    \vspace{-2mm}
    \begin{tabular}{c|cccc}
    \toprule
         Datasets  &AdamW &SAM   &F-SAM (ours) \\
         \midrule
         \textbf{CIFAR-10}  &98.10$_{\pm 0.10}$ &98.27$_{\pm 0.05}$ &\textbf{98.43}$_{\pm 0.07}$\\
         \textbf{CIFAR-100}  &88.44$_{\pm 0.10}$ &89.10$_{\pm 0.11}$ &\textbf{89.49}$_{\pm 0.12}$\\
    \textbf{Standford Cars}  &74.78$_{\pm 0.09}$ &75.39$_{\pm 0.05}$ &\textbf{75.82}$_{\pm 0.14}$ \\
    \textbf{OxfordIIITPet}  &92.42$_{\pm 0.43}$ &92.70$_{\pm 0.26}$ &\textbf{92.90}$_{\pm 0.23}$ \\
         \textbf{Flowers102}  &74.82$_{\pm 0.36}$ &74.38$_{\pm 0.24}$  &$\textbf{75.15}_{\pm 0.35}$ \\
    \bottomrule 
    \end{tabular}
    \vspace{-3mm}
    \caption{Results on transfer learning by fine-tuning.}
    \vspace{-5mm}
    \label{tab:finetuning}
\end{table}

\subsection{Additional Studies}
\label{sec:ablation}
\subsubsection{Robustness to Label Noise}
Since previous works have shown that SAM is robust to label noise, in this subsection, we also test the performance of F-SAM in the presence of symmetric label noise by random flipping. The training settings are the same as in \cref{sec:trainingfromscratch}. 
From the results in \cref{tab:labelnoise}, we observe that F-SAM consistently improves the performance from SAM, confirming its improved generalization. Notably, such improvement is more obvious when the noise rate is large (i.e. 60\%-80\%). 
This is perhaps because when the data label is wrong, the harmfulness of increasing the sharpness of other batch data is more prominent, even posing challenges to convergence.
On severe noise ratio of 80\%, we find that vanilla SAM even suffers a collapse with the original setting, which is also reported by Foret et al.~\cite{foret2020sharpness}, and our F-SAM relieves it, achieving a remarkable accuracy gain of $28\%$.
\begin{table}[htbp]
	\setlength{\tabcolsep}{3.5pt}
    \centering
    \small
    \vspace{-3mm}
    \begin{tabular}{c|cccc}
    \toprule
        {Rates} &\textbf{20\%}  &\textbf{60\%} &\textbf{70\%} &\textbf{80\%}\\
    \midrule
    SGD &87.05$_{\pm 0.06}$ &52.21$_{\pm 0.23}$ &39.31$_{\pm 0.13}$ &27.86$_{\pm 0.53}$\\
    SAM &95.27$_{\pm 0.09}$ &90.08$_{\pm 0.09}$  &84.89$_{\pm 0.10}$ &31.73$_{\pm 3.37}$ \\
    F-SAM &\textbf{95.42}$_{\pm 0.08}$ &\textbf{90.47}$_{\pm 0.05}$ &\textbf{86.48}$_{\pm 0.07}$ &\textbf{59.39}$_{\pm 1.77}$ \\
    \bottomrule
    \end{tabular}
    \vspace{-3mm}
    \caption{Results under label noise on CIFAR-10 with ResNet-18.}
    \label{tab:labelnoise}
    \vspace{-3mm}
\end{table}
\vspace{-5mm}
\subsubsection{Robustness to Perturbation Radius}
One deficiency of SAM is its sensitivity to the perturbation radius, which may require choosing different radii for optimal performance on different datasets. Especially, an excessively large perturbation radius can result in a sharp decrease in generalization performance. We attribute a portion of this over-sensitivity to the impact of the adversarial perturbation derived from the current minibatch data on the other data samples. Such an impact can be more obvious when the perturbation radius increases as the magnitude of the full gradient component grows correspondingly. 
In our F-SAM, such impact on other data samples is eliminated as much as possible, and thus, we can expect that it is more robust to the choices of the perturbation radius. In \cref{fig:different-rho}, we plot the performance of SAM and F-SAM under different perturbation radii. We can observe that F-SAM is much less sensitive to $\rho$ than SAM. Especially on larger perturbation radius, the performance gain of F-SAM over SAM is more significant.
For example, when $\rho$ is set to 2x larger than the optimal on CIFAR100, SAM's performance suffers a sharp drop from $80.88\%$ to $78.83\%$, but F-SAM still maintains a good performance of $80.30\%$.
We further compare the training curves of SAM and F-SAM under different perturbation radii in Appendix~A6, showing that F-SAM greatly facilitates training on large radius.
This confirms F-SAM's improved robustness to different perturbation radii.

\subsubsection{Results on Different Batch Sizes}
As the main claim of this paper is that the full gradient component existing in SAM's adversarial perturbation does not contribute to generalization and may have detrimental effects.
As the training batch size increases, the corresponding full gradient component existing in the minibatch gradient can be strengthened. Thus, the improvement of F-SAM from removing the full gradient component could be more obvious. 
In \cref{fig:largebz}, we compare the performance of SGD, SAM, and our F-SAM under batch sizes from $\{ 128, 512, 2048 \}$ while keeping other hyper-parameters unchanged. 
We observe that the performance gain of F-SAM over SAM increases significantly as the batch size increases. This further validates our findings and supports the effectiveness of removing the full gradient component.

\noindent
\textbf{Connection to m-sharpness.} 
We would like to relate our removing the full gradient component to the intriguing $m$-sharpness in SAM, which is defined as the search for an adversarial perturbation that maximizes the sum of losses over sub-batches of $m$ training points \cite{foret2020sharpness}. This phenomenon emphasizes the importance of using a small $m$ to effectively improve generalization \cite{foret2020sharpness,andriushchenko2022towards, behdin2022improved}.
It is worth noting that by utilizing a small sub-batch, one implicitly suppresses the effects of the full gradient component. Hence, our findings can provide new insights into understanding $m$-sharpness.

\begin{figure}[!t]
    \centering
    \begin{subfigure}{0.45\linewidth}
 \centering
  \includegraphics[width=1\linewidth]{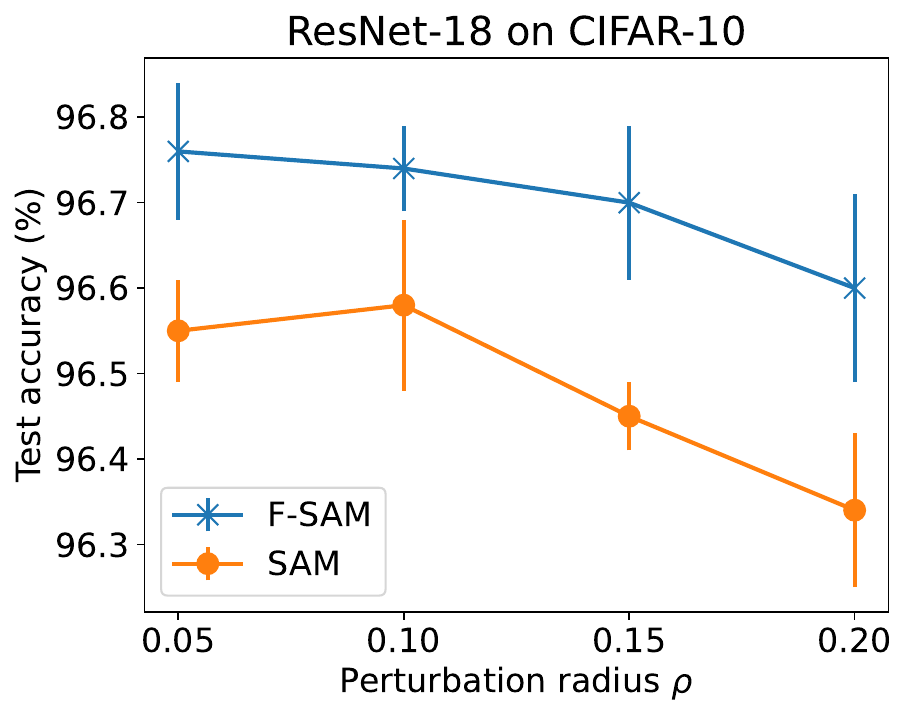}
 \end{subfigure}
    \hspace{0.03in}
    \begin{subfigure}{0.45\linewidth}
 \centering \hfill
  \includegraphics[width=1\linewidth]{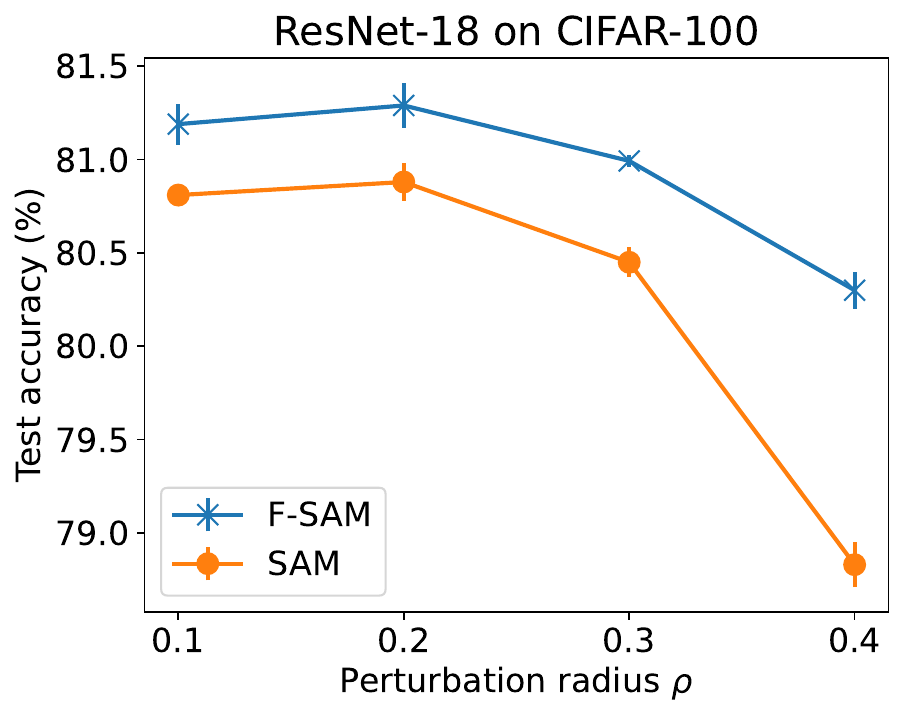}
 \end{subfigure}
 \vspace{-2mm}
    \caption{Results under different perturbation radii $\rho$. 
    }
    \label{fig:different-rho}
\end{figure}


\begin{figure}[!t]
\vspace{-3mm}
    \centering
\includegraphics[width=0.45\linewidth]{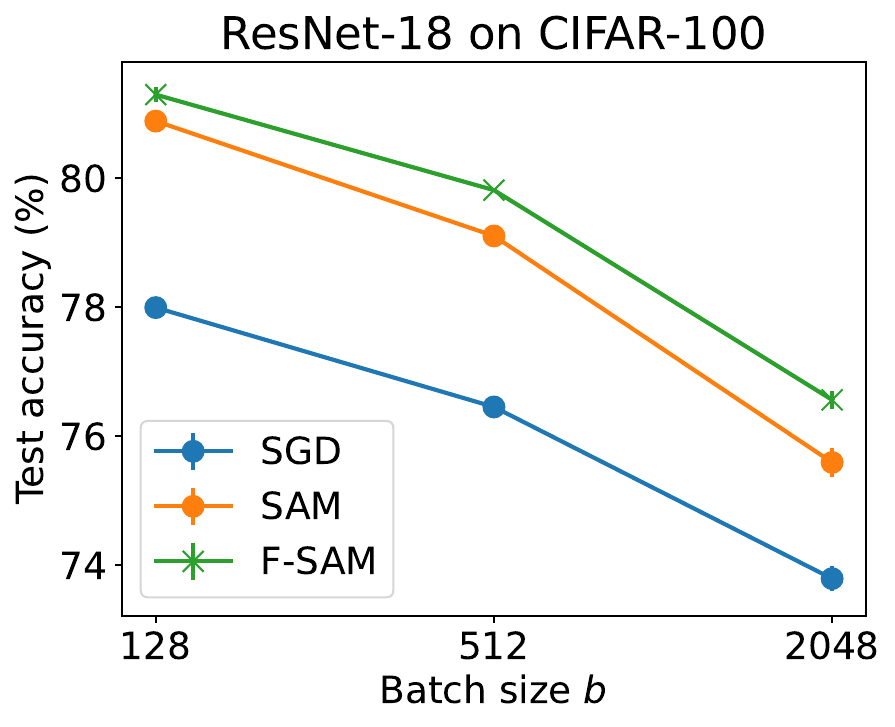}
    \vspace{-3mm}
    \caption{Performance comparison with different batch sizes.}
    \label{fig:largebz}
    \vspace{-5mm}
\end{figure}

\section{Conclusion}
In this paper, we conduct an in-depth  investigation into the core components of SAM's generalization.
By decomposing the minibatch gradient into two orthogonal components, we discover that the full gradient component in adversarial perturbation contributes minimally to generalization and may even have detrimental effects, while the stochastic gradient noise component plays a crucial role in enhancing generalization. 
We then propose a new variant of SAM called F-SAM to eliminate the undesirable effects of the full gradient component. Extensive experiments across various tasks demonstrate that F-SAM significantly improves the robustness and generalization performance of SAM.  
{
\vspace{-2mm}
\section*{Acknowledgement}
\vspace{-1mm}
This work was partially supported by National Key Research Development Project (2023YFF1104202), National Natural Science Foundation of China (62376155), Shanghai Municipal Science and Technology Research Program (22511105600) and Major Project (2021SHZDZX0102).
}

{
    \small
   \bibliographystyle{unsrt}
    \bibliography{main}
}

\clearpage 
\newpage
\clearpage
\setcounter{page}{1}
\setcounter{assumption}{1}
\onecolumn

\setcounter{section}{0}
\renewcommand{\thesection}{A\arabic{section}}
\renewcommand{\thecorollary}{A\arabic{corollary}}
\renewcommand{\thetheorem}{A\arabic{theorem}}
\renewcommand{\thefigure}{A\arabic{figure}}

\begin{center}
\Large
    \textbf{Friendly Sharpness-Aware Minimization} \\ \vspace{2mm}
    Supplementary Material \\ \vspace{2mm}
\end{center}


\section{Proof of Theorem 1
}
\label{sec:proof_theoremestimation}
The proof is based on \cite{kohler2017sub, staib2019escaping}.
We first introduce the Vector Bernstein inequality presented in \cite{kohler2017sub}. 
\begin{theorem}[Vector Bernstein]\label{thm:vecberstein}
 Let $\boldsymbol{x_1},\dots, \boldsymbol{x_n}$ be independent, zero-mean vector-valued random variables with common dimension d. We have the waeker Vector Berstein version as follows
\begin{equation}
    \begin{split}
        P\Big(\|\sum_{i=1}^n{\boldsymbol {x_i}}\|\ge \epsilon\Big) \leq \exp(-\frac{\epsilon^2}{8V} +\frac{1}{4}),
    \end{split}
\end{equation}
where $V = \sum_{i =1}^n \mathbb E\Big[{\|\boldsymbol{x_i}\| }^2\Big]$ is the sum of the variances of the centered vectors $\boldsymbol{x_i}$.
\end{theorem}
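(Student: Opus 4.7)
The plan is to derive the stated inequality as a corollary of a sharper Hilbert-space concentration result (Pinelis-type), combined with a simple algebraic rearrangement that absorbs the slack constant $1/4$. Setting $S_n = \sum_{i=1}^n \boldsymbol{x}_i$, independence and the zero-mean hypothesis give the martingale structure $(S_k)_{k \leq n}$ with respect to the natural filtration $\mathcal{F}_k = \sigma(\boldsymbol{x}_1,\dots,\boldsymbol{x}_k)$, and a direct expansion yields $\mathbb{E}\|S_n\|^2 = \sum_{i=1}^n \mathbb{E}\|\boldsymbol{x}_i\|^2 = V$.

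The first and probabilistically substantive step is to invoke a strong Hilbert-space martingale concentration bound, e.g.\ Pinelis' 1994 inequality, to obtain
\begin{equation*}
P\bigl(\|S_n\| - \mathbb{E}\|S_n\| \geq t\bigr) \leq \exp\bigl(-t^2/(2V)\bigr).
\end{equation*}
This is typically proved by showing that an appropriately smoothed function of $\|S_k\|$ is a submartingale with a controlled MGF increment, then applying Markov's inequality. The second step handles the recentering: by Cauchy--Schwarz, $\mathbb{E}\|S_n\| \leq \sqrt{\mathbb{E}\|S_n\|^2} = \sqrt{V}$, so for $\epsilon \geq \sqrt{V}$,
\begin{equation*}
P(\|S_n\| \geq \epsilon) \leq \exp\bigl(-(\epsilon - \sqrt{V})^2/(2V)\bigr).
\end{equation*}
The third step is purely algebraic: expanding $(\epsilon - \sqrt{V})^2 = \epsilon^2 - 2\epsilon\sqrt{V} + V$ and using AM--GM in the form $2\epsilon\sqrt{V} \leq \epsilon^2/2 + 2V$, I get $(\epsilon - \sqrt{V})^2/(2V) \geq \epsilon^2/(4V) - 1/2 \geq \epsilon^2/(8V) - 1/4$; for $\epsilon < \sqrt{V}$, the right-hand side of the claimed bound already exceeds $e^{1/8} > 1$, so the inequality is trivially true.

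The hard part will be step one: establishing the Pinelis-type bound in a form that depends only on the second-moment sum $V$ rather than on a uniform norm bound for the $\boldsymbol{x}_i$. The classical scalar Bernstein MGF argument presupposes boundedness, which is absent in the hypotheses here; circumventing this requires the Hilbert-space 2-smoothness machinery, or alternatively a truncation-plus-scalar-Bernstein hybrid applied coordinatewise followed by a union bound and reoptimization. Once that bound is in place, steps two and three are routine, and the slack constant $1/4$ emerges naturally to absorb the $\sqrt{V}$ shift introduced during the recentering from $\mathbb{E}\|S_n\|$ to $0$.
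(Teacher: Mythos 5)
Your plan founders at exactly the step you yourself flagged as hard: the variance-only Pinelis-type bound $P(\|S_n\| - \mathbb{E}\|S_n\| \ge t) \le \exp(-t^2/(2V))$ is not merely difficult to establish under the stated hypotheses — it is false, and no machinery can rescue it. Finite second moments alone never yield subgaussian tails. Take $n=1$ and let $\boldsymbol{x}_1$ equal $\pm a$ with probability $p/2$ each and $0$ otherwise, so $V = pa^2$ and $\mathbb{E}\|\boldsymbol{x}_1\| = pa$. Then $P(\|\boldsymbol{x}_1\| \ge a) = p$, while your step-one bound at $t = a(1-p)$ gives $\exp(-(1-p)^2/(2p))$, exponentially smaller than $p$ for small $p$. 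Pinelis' 2-smoothness results control Hilbert-space martingales through almost-sure bounds on the increments (or through Bennett/Bernstein combinations of conditional variance \emph{and} a sup-norm bound), never through $\sum_i \mathbb{E}\|\boldsymbol{x}_i\|^2$ alone; a truncation-plus-scalar-Bernstein hybrid reintroduces the untruncated tail, which variance does not control at subgaussian scale. Your steps two and three (recentering via Cauchy--Schwarz, then absorbing the $\sqrt{V}$ shift into the constants $8$ and $1/4$) are fine in themselves — note only that the inequality $\epsilon^2/(4V) - 1/2 \ge \epsilon^2/(8V) - 1/4$ needs $\epsilon^2 \ge 2V$, while for $\epsilon < \sqrt{2V}$ the claimed bound already exceeds $1$, so that regime is trivial.

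The deeper issue is that the statement as printed is itself false in this unconditional form: the same one-dimensional example violates it directly, since $\exp(-1/(8p) + 1/4) \ll p$ for small $p$. The paper in fact offers no proof of this theorem at all — it imports it from Kohler and Lucchi, where (following Gross) the hypotheses include a uniform bound $\|\boldsymbol{x}_i\| \le \mu$ and the conclusion holds only in the small-deviation regime $\epsilon$ at most on the order of $V/\mu$; the paper's transcription silently drops both conditions. Within that regime the standard exponential-moment argument closes: boundedness controls the MGF of each increment by its variance for small tilt parameters, and the restriction on $\epsilon$ keeps the optimal tilt in the variance-dominated range. So the correct repair of your proof is not a stronger concentration lemma but the restoration of the missing hypotheses, after which the bound follows from the bounded-increment MGF computation and your recentering steps become unnecessary.
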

\begin{corollary}\label{cor:weightedberstein}
Let $\boldsymbol{x_1},\dots, \boldsymbol{x_n}$ be independent, zero-mean vector-valued random variables with common dimension d. Assume $\mathbb E\Big[{\|\boldsymbol {x_i}\| }^2\Big] \leq M$. Let $w\in\Delta_n$ in the simplex. Then we have
\begin{equation}
    \begin{split}
        P\Big(\|\sum_{i=1}^n{w_i \boldsymbol {x_i}}\|\ge \epsilon\Big) \leq \exp(-\frac{\epsilon^2}{8M\|w\|_2^2} +\frac{1}{4}).
    \end{split}
\end{equation}
\end{corollary}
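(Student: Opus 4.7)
The plan is to derive Corollary \ref{cor:weightedberstein} as a direct consequence of Theorem \ref{thm:vecberstein} by a change of variables. Specifically, I would define the rescaled random vectors $\boldsymbol{y}_i := w_i \boldsymbol{x}_i$ for $i=1,\dots,n$, and then simply apply the unweighted Vector Bernstein inequality to the family $\{\boldsymbol{y}_i\}$.

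First I would verify that the family $\{\boldsymbol{y}_i\}$ satisfies the hypotheses of Theorem \ref{thm:vecberstein}. Independence is inherited from $\{\boldsymbol{x}_i\}$ since the $w_i$ are deterministic weights, and $\mathbb{E}[\boldsymbol{y}_i] = w_i\,\mathbb{E}[\boldsymbol{x}_i] = 0$. The common dimension is unchanged. Next I would compute the effective variance sum for $\{\boldsymbol{y}_i\}$, namely
\begin{equation*}
V_y := \sum_{i=1}^n \mathbb{E}\bigl[\|\boldsymbol{y}_i\|^2\bigr] = \sum_{i=1}^n w_i^2\,\mathbb{E}\bigl[\|\boldsymbol{x}_i\|^2\bigr] \le M\sum_{i=1}^n w_i^2 = M\|w\|_2^2,
\end{equation*}
using the assumed bound $\mathbb{E}[\|\boldsymbol{x}_i\|^2]\le M$.

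Then I would invoke Theorem \ref{thm:vecberstein} applied to $\{\boldsymbol{y}_i\}$, which gives
\begin{equation*}
P\Bigl(\bigl\|\textstyle\sum_{i=1}^n \boldsymbol{y}_i\bigr\| \ge \epsilon\Bigr) \le \exp\Bigl(-\tfrac{\epsilon^2}{8V_y} + \tfrac{1}{4}\Bigr).
\end{equation*}
Since $\sum_i \boldsymbol{y}_i = \sum_i w_i\boldsymbol{x}_i$ by construction, and since $V_y \le M\|w\|_2^2$, monotonicity of $x \mapsto \exp(-c/x)$ for $c>0$ (i.e., upper bounding the denominator makes the exponent less negative and thus inflates the right-hand side) yields the desired bound
\begin{equation*}
P\Bigl(\bigl\|\textstyle\sum_{i=1}^n w_i\boldsymbol{x}_i\bigr\| \ge \epsilon\Bigr) \le \exp\Bigl(-\tfrac{\epsilon^2}{8M\|w\|_2^2} + \tfrac{1}{4}\Bigr).
\end{equation*}

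There is essentially no difficult step here: the result is a one-line reduction, and the only thing to be careful about is the direction of the inequality when replacing the exact variance $V_y$ by its upper bound $M\|w\|_2^2$ in the exponent. The assumption $w\in\Delta_n$ (the simplex, so $w_i\ge 0$ and $\sum_i w_i = 1$) is not actually used for nonnegativity in the probability bound; it only plays a role in that $\|w\|_2^2 \le 1$, which makes the bound meaningful (in the i.i.d.\ uniform case $w_i=1/n$ one recovers $\|w\|_2^2=1/n$, i.e., the familiar $n^{-1}$ scaling). So the entire argument is a rescaling applied to the hypothesized inequality.
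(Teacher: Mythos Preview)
Your proposal is correct and follows exactly the same route as the paper: define $\hat{\boldsymbol{x}}_i = w_i\boldsymbol{x}_i$ and apply Theorem~\ref{thm:vecberstein} directly, bounding the resulting variance sum by $M\|w\|_2^2$. The paper states this in a single line, and your write-up simply unpacks that same rescaling argument in detail.
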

\begin{proof}
    Simply apply Theorem~\ref{thm:vecberstein} with $\boldsymbol{\hat x_i} = w_i\boldsymbol{x_i}$.
\end{proof}
\noindent
Now we can apply the above concentration results to prove Theorem 1.
\begin{proof}
First we separately bound the bias and variance, then use Corollary~\ref{cor:weightedberstein}. The bias is:
\begin{align}
\left\|\sum_{t=1}^T w_t f\left(x_t\right)-f\left(x_T\right)\right\| & =\left\|\sum_{t=1}^T w_t\left(f\left(x_t\right)-f\left(x_T\right)\right)\right\| \\
& \leq \sum_{t=1}^T w_t\left\|f\left(x_t\right)-f\left(x_T\right)\right\| \\
& \leq \beta \sum_{t=1}^T w_t\left\|x_t-x_T\right\| \\
& \leq \beta \sum_{t=1}^T w_t \sum_{s=t+1}^T\left\|x_s-x_{s-1}\right\| \\
& \leq \gamma G \beta \sum_{t=1}^T w_t(T-t) \\
& =\gamma G \beta \cdot \frac{1}{\sum_{t=1}^T \lambda^{T-t}} \cdot \sum_{t=1}^T \lambda^{T-t}(T-t) .
\end{align}
\end{proof}
Note that by a well-known identity,
\begin{align}
\sum_{t=1}^T \lambda^{T-t}(T-t)=\sum_{s=0}^{T-1} s \lambda^s \leq \sum_{s=0}^{\infty} s \lambda^s=\frac{\lambda}{(1-\lambda)^2} .
\end{align}
Hence, the bias is bounded by
\begin{align}
\gamma G \beta \cdot \frac{1}{\sum_{t=1}^T \lambda^{T-t}} \cdot \frac{\lambda}{(1-\lambda)^2} & =\gamma G \beta \cdot \frac{1-\lambda}{1-\lambda^T} \cdot \frac{\lambda}{(1-\lambda)^2} \\
& =\gamma G \beta \cdot \frac{1}{1-\lambda^T} \cdot \frac{\lambda}{1-\lambda} \\
& \leq G \beta \cdot \frac{\gamma}{(1-\lambda)\left(1-\lambda^T\right)}.
\end{align}
Applying Corollary \ref{cor:weightedberstein} to $x_t=x_t-f\left(x_t\right)$, we have that
\begin{align}
\mathbb{P}\left(\left\|\sum_{t=1}^T w_t\left(y_t-f\left(x_t\right)\right)\right\|>k\right) \leq  \exp(-\frac{\epsilon^2}{8M^2\|w\|_2^2} +\frac{1}{4}).
\end{align}
Now note that
\begin{align}
\|w\|_2^2=\sum_{t=1}^T w_t^2 & =\frac{1}{\left(\sum_{t=1}^T\lambda^{T-t}\right)^2} \sum_{t=1}^T\left(\lambda^2\right)^{T-t} \\
& =\frac{(1-\lambda)^2}{\left(1-\lambda^T\right)^2} \sum_{t=1}^T\left(\lambda^2\right)^{T-t} \\
& =\frac{(1-\lambda)^2}{\left(1-\lambda^T\right)^2} \cdot \frac{1-\lambda^{2 T}}{1-\lambda^2} \\
& =\frac{1-\lambda^{2 T}}{\left(1-\lambda^T\right)^2} \cdot \frac{(1-\lambda)^2}{1-\lambda^2} \\
& =\frac{1+\lambda^T}{1-\lambda^T} \cdot \frac{1-\lambda}{1+\lambda} \\
& \leq \frac{2(1-\lambda)}{1-\lambda^T} .
\end{align}
Setting the right hand side of the high probability bound to $\delta$, we have concentration w.p. $1-\delta$ for $k$ satisfying
\begin{align}
\delta \ge \exp \left(-\frac{\epsilon^2}{8M{\|w\|}_2^2} +\frac{1}{4}\right).
\end{align}
Rearranging, we find
\begin{align}
\log \left(e^{-\frac{1}{4}}/{\delta}\right) \le \frac{\epsilon^2}{8M\|w\|_2^2}\\ 
\Leftrightarrow \epsilon \ge 2 \sqrt{2} \sqrt M\|w\|_2 \cdot \sqrt{\log \left(e^{-\frac{1}{4}} / \delta\right)}.
\end{align}
Combining this with the triangle inequality,
\begin{align}
\left\|\sum_{t=1}^T w_t y_t-f\left(x_T\right)\right\| & \leq\left\|\sum_{t=1}^T w_t y_t-\sum_{t=1}^T w_t f\left(x_t\right)\right\|+\left\|\sum_{t=1}^T w_t\left(f\left(x_t\right)-f\left(x_T\right)\right)\right\| \\
& \leq 2\sqrt{2} \sqrt M\|w\|_2 \sqrt{\log (e^{-\frac{1}{4}}/{\delta})}+G \beta \cdot \frac{\gamma}{(1-\lambda)\left(1-\lambda^T\right)} \\
&\leq 2\sqrt{2} \sqrt{\frac{2(1-\lambda)}{1-\lambda^T}} \sqrt M\sqrt{\log (e^{-\frac{1}{4}}/{\delta})}+G \beta \cdot \frac{\gamma}{(1-\lambda)\left(1-\lambda^T\right)} \\
& \leq 4 \sqrt M \frac{\sqrt{1-\lambda}}{\sqrt{1-\lambda^T}} \sqrt{\log (e^{-\frac{1}{4}} / \delta)}+G \beta \cdot \frac{\gamma}{(1-\lambda)\left(1-\lambda^T\right)},
\end{align}
with probability $1-\delta$. Since $1 / \sqrt{1-\lambda^T} \leq 1 /\left(1-\lambda^T\right)$, this can further be bounded by
\begin{align}
\left(4 \sqrt M\cdot \sqrt{1-\lambda} \cdot \sqrt{\log \left(e^{-\frac{1}{4}}/ \delta\right)}+G \beta \cdot \frac{\gamma}{1-\lambda}\right) \cdot \frac{1}{1-\lambda^T}.
\end{align}
Write $\alpha=1-\lambda$. The inner part of the bound is optimized when
\begin{align}
4 \sqrt M \cdot \sqrt{\alpha} \cdot \sqrt{\log \left(e^{-\frac{1}{4}} /\delta\right)} & =G \beta \cdot \frac{\gamma}{\alpha} \\
\Leftrightarrow \alpha^{3/2} & =\frac{G \beta \gamma}{4 \sqrt M \sqrt{\log (e^{-\frac{1}{4}}/ \delta)}} \\
\Leftrightarrow \alpha & = \frac{G^{2/3} \beta^{2/3} \gamma^{2/3}}{4 ^{2/3} \cdot M^{1/3} \cdot{\left(\log \left(e^{-\frac{1}{4}} / \delta\right)\right)}^{1/3}},
\end{align}
for which the overall inner bound is 
\begin{align}
2G\beta \cdot \frac{\gamma}{\alpha}=2^{7/3} \cdot M^{1/3} \cdot \log \left(e^{-\frac{1}{4}} / \delta\right)^{1/3} \cdot(G \beta \gamma)^{1/3}.
\end{align}
If $T$ is sufficiently large, the $1 /\left(1-\lambda^T\right)$ term will be less than 2 . In particular,
$$
T>\frac{2}{\log (1+\alpha)} \Longrightarrow \frac{1}{1-(1-\alpha)^T}<2 .
$$
Since $\log (1+\alpha)>\alpha / 2$ for $\alpha<1$, it suffices to have $T>4 / \alpha$.

\section{Proof of Lemma 1
}
\label{sec:proof_thm2}
\begin{proof}
\begin{align}
 			\mathbb{E} \left \langle \widetilde{\mathrm{Proj}}^\top_{\nabla L(\boldsymbol{w}_t)} \nabla L_{\mathcal{B}}(\boldsymbol{w}_t), \nabla L(\boldsymbol{w}_t) \right \rangle
 			=&~\mathbb{E} \left \langle \nabla L_{\mathcal{B}}(\boldsymbol{w}_t)  - \sigma \boldsymbol{m}_t, \nabla L(\boldsymbol{w}_t) \right \rangle \\
 			=&~ \| \nabla L(\boldsymbol{w}_t) \|^2 (1-\sigma). \label{equ:unbaised}
\end{align}
Eqn.~(\ref{equ:unbaised}) uses the assumption that $\boldsymbol{m}_t$ is an unbiased estimator to $\nabla L(\boldsymbol{w}_t)$. Then with $\sigma=1$, we complete the proof.
\end{proof}

\section{Proof of Theorem 2
}
\begin{proof}
    Denote $\boldsymbol{w}_{t+1/2}= \boldsymbol{w}_t+\rho \frac{\nabla L_{\mathcal{B}}(\boldsymbol{w}_{t})-\boldsymbol{m}_t}{\| \nabla L_{\mathcal{B}}(\boldsymbol{w}_{t})-\boldsymbol{m}_t \|}$.
    From Assumption 1,
     it follows that
    \begin{align}
    L(\boldsymbol{w}_{t+1})\le& L(\boldsymbol{w}_{t}) + \nabla L(\boldsymbol{w}_{t})^\top (\boldsymbol{w}_{t+1}-\boldsymbol{w}_{t})+\frac{\beta}{2}  \| \boldsymbol{w}_{t+1}- \boldsymbol{w}_{t} \|^2 
    \\=& L(\boldsymbol{w}_{t})-\gamma_t \nabla L(\boldsymbol{w}_{t})^\top \nabla L_{\mathcal{B}}(\boldsymbol{w}_{t+1/2})+\frac{\gamma_t^2 \beta}{2} \|  \nabla L_{\mathcal{B}}(\boldsymbol{w}_{t+1/2}) \|^2 
    \\=& L(\boldsymbol{w}_{t})-\gamma_t \nabla L(\boldsymbol{w}_{t})^\top \nabla L_{\mathcal{B}}(\boldsymbol{w}_{t+1/2}) \nonumber
    \\&+\frac{\gamma_t^2 \beta}{2} \left( \| 
     \nabla L_{\mathcal{B}}(\boldsymbol{w}_{t+1/2})
     -\nabla L(\boldsymbol{w}_{t}) \|^2- \| \nabla L(\boldsymbol{w}_{t}) \|^2 + 2\nabla L(\boldsymbol{w}_{t})^\top \nabla L_{\mathcal{B}}(\boldsymbol{w}_{t+1/2}) \right )
    \\=& L(\boldsymbol{w}_{t})-\frac{\gamma_t^2 \beta}{2} \| \nabla L(\boldsymbol{w}_{t}) \|^2+ \frac{\gamma_t^2 \beta}{2} \|  \nabla L_{\mathcal{B}}(\boldsymbol{w}_{t+1/2})-\nabla L(\boldsymbol{w}_{t}) \|^2-(1-\beta \gamma_t)\gamma_t \nabla L(\boldsymbol{w}_{t})^\top \nabla L_{\mathcal{B}}(\boldsymbol{w}_{t+1/2})
    \\\le&  L(\boldsymbol{w}_{t})-\frac{\gamma_t^2 \beta}{2} \| \nabla L(\boldsymbol{w}_{t}) \|^2+\gamma_t^2 \beta \|  \nabla L_{\mathcal{B}}(\boldsymbol{w}_{t+1/2})-\nabla L(\boldsymbol{w}_{t+1/2}) \|^2 \nonumber
    \\&+ \gamma_t^2 \beta \| \nabla L(\boldsymbol{w}_{t+1/2}) - \nabla L(\boldsymbol{w}_{t})\|^2-(1-\beta \gamma_t)\gamma_t \nabla L(\boldsymbol{w}_{t})^\top \nabla L_{\mathcal{B}}(\boldsymbol{w}_{t+1/2}).
\end{align}
The last step using the fact that $\|a-b\|^2\le 2\|a-c\|^2+2\|c-b\|^2$. Then taking the expectation on both sides gives:
\begin{align}
    \mathbb{E} [L(\boldsymbol{w}_{t+1}) ]\le \mathbb{E} [L(\boldsymbol{w}_{t})]-\frac{\gamma_t^2 \beta}{2} \mathbb{E} \| \nabla L(\boldsymbol{w}_{t}) \|^2 + \gamma_t^2 \beta M
    + \rho_t^2 \gamma_t^2 \beta^3 - (1-\beta \gamma_t)\gamma_t \mathbb{E} [  \nabla L(\boldsymbol{w}_{t})^\top \nabla L_{\mathcal{B}}(\boldsymbol{w}_{t+1/2}) ].
    \label{equ:descent}
\end{align}
For the last term, we have:
\begin{align}
\begin{split}
    \mathbb{E}  [  \nabla L(\boldsymbol{w}_{t})^\top \nabla L_{\mathcal{B}}(\boldsymbol{w}_{t+1/2}) ]
    =&\mathbb{E} \left [  \nabla L(\boldsymbol{w}_{t})^\top \left(\nabla L_{\mathcal{B}}(\boldsymbol{w}_{t+1/2}) -  \nabla L_{\mathcal{B}}(\boldsymbol{w}_{t})  + \nabla L_{\mathcal{B}}(\boldsymbol{w}_{t}) \right )  \right ] \\
    =&\mathbb{E} \left [  \|\nabla L(\boldsymbol{w}_{t}) \|^2 \right ] + \mathcal{C}.
\end{split}
\label{equ:cross_term}
\end{align}
where $\mathcal{C}=\mathbb{E} \left[ \nabla L(\boldsymbol{w}_{t})^\top
\left (\nabla L_{\mathcal{B}}(\boldsymbol{w}_{t+1/2}) - \nabla L_{\mathcal{B}}(\boldsymbol{w}_{t}) \right ) 
 \right]$. Using the Cauchy-Schwarz inequality, we have
\begin{align}
\begin{split}
\mathcal{C}
\le  &\mathbb{E}
\left [
    \frac{1}{2} \|\nabla L(\boldsymbol{w}_{t}) \|^2 +  \frac{1}{2} \| \nabla L(\boldsymbol{w}_{t+1/2})-  \nabla L(\boldsymbol{w}_{t}) \|^2 \right ] \\
    \le  &\frac{1}{2}  \mathbb{E}  \|\nabla L(\boldsymbol{w}_{t}) \|^2 + \frac{\rho_t^2\beta^2}{2}.
\end{split}
\label{equ:C}
\end{align}
Plugging Eqn.~(\ref{equ:cross_term}) and (\ref{equ:C}) into Eqn.~(\ref{equ:descent}), we obtain:
\begin{align}
    \mathbb{E} [L(\boldsymbol{w}_{t+1}) ]\le &\mathbb{E} [L(\boldsymbol{w}_{t})]-\frac{\gamma_t^2 \beta}{2} \mathbb{E} \| \nabla L(\boldsymbol{w}_{t}) \|^2 + \gamma_t^2 \beta M + \rho_t^2 \gamma_t^2 \beta^3 - (1-\beta \gamma_t)\gamma_t   \mathbb{E}   \|\nabla L(\boldsymbol{w}_{t}) \|^2   \\&+ (1-\beta \gamma_t)\gamma_t \left ( \frac{1}{2}  \mathbb{E}   \|\nabla L(\boldsymbol{w}_{t}) \|^2 + \frac{\rho_t^2\beta^2}{2} \right ) \\
    \le  &\mathbb{E} [L(\boldsymbol{w}_{t})] -\frac{\gamma_t}{2} \mathbb{E}   \|\nabla L(\boldsymbol{w}_{t}) \|^2 + \gamma_t^2 \beta M +\frac{1}{2}\gamma_t\rho_t^2\beta^2(1+\beta\gamma_t).
\end{align}
Taking summation over $T$ iterations, we have:
\begin{align}
     \frac{\gamma_0}{2 \sqrt{T}} \sum_{t=1}^{T}  \mathbb{E}   \|\nabla L(\boldsymbol{w}_{t}) \|^2\le \mathbb{E} [L(\boldsymbol{w}_{0})] - \mathbb{E} [L(\boldsymbol{w}_{T})]+(\beta M+\frac{1}{2} \rho_0^2 \beta^3)\gamma_0^2 \sum_{t=1}^{T}  \frac{1}{T} + \frac{1}{2} \rho_0^2 \beta^2 \gamma_0 \sum_{t=1}^{T} \frac{1}{t}.
\end{align}
This gives 
\begin{align}
\frac{1}{{T}} \sum_{t=1}^{T}  \mathbb{E}   \|\nabla L(\boldsymbol{w}_{t}) \|^2&\le
\frac{2 \left( \mathbb{E} [L(\boldsymbol{w}_{0})] - \mathbb{E} [L(\boldsymbol{w}_{T})] \right)}{\gamma_0 \sqrt{T}}+\frac{2\beta M \gamma_0+ \rho_0^2 \beta^3\gamma_0}{\sqrt{T}}+\frac{\rho_0^2 \beta^2 \log T}{\sqrt{T}} \\
&\le
\frac{2 \left( \mathbb{E} [L(\boldsymbol{w}_{0}) - L(\boldsymbol{w}^{*})] \right)}{\gamma_0 \sqrt{T}}+\frac{2\beta M \gamma_0+ \rho_0^2 \beta^3\gamma_0}{\sqrt{T}}+
\frac{\rho_0^2 \beta^2 \log T}{\sqrt{T}} \\
&=\frac{2 \Delta}{\gamma_0 \sqrt{T}}+\frac{\Theta}{\sqrt{T}}+
\frac{\Pi\log T}{\sqrt{T}},
\end{align}
where $\boldsymbol{w}^*$ is the optimal solution, 
$\Delta=\mathbb{E} [L(\boldsymbol{w}_{0}) - L(\boldsymbol{w}^{*})], \Theta=2\beta M \gamma_0+ \rho_0^2 \beta^3\gamma_0$, and $\Pi=\rho_0^2 \beta^2$.
\end{proof}

\section{Extension to SAM's variants}

Since we only modify the perturbation of SAM, our modification can be straightforwardly extended into the SAM variants, such as ASAM \cite{kwon2021asam} and FiserSAM \cite{kim2022fisher}. For SAM variants, their min-max objectives can be written into a unified formulation:
\begin{align}
	\min_{\boldsymbol{w}} \max_{\| T_w^{-1} \boldsymbol{\epsilon} \| \le \rho} L(\boldsymbol{w}+\boldsymbol{\epsilon}),
	\label{equ:sam-variant}
\end{align}
where $T_w$ is a normalization operator, e.g., $T_w=\|\boldsymbol{w}\|$ for ASAM.
The inner maximization problem in Eq.~(\ref{equ:sam-variant}) can then be solved via first-order approximation as follows:
\begin{align}
	\boldsymbol{\epsilon}_s=\rho \frac{T_w^2 \nabla L_\mathcal{B}(\boldsymbol{w})}
	{\|T_w \nabla L_\mathcal{B}(\boldsymbol{w})\|_2}.
	\label{equ:perturbation-sam-variant}
\end{align}
To incorporate our improvement, we can modify Eqn. (\ref{equ:perturbation-sam-variant}) as follows:
\begin{align}
	\boldsymbol{\epsilon}_s=\rho \frac{T_w^2 \left (\nabla L_\mathcal{B}(\boldsymbol{w}_t) - \sigma \boldsymbol{m}_t \right )}
	{\|T_w \left (\nabla L_\mathcal{B}(\boldsymbol{w}_t) - \sigma \boldsymbol{m}_t \right )\|_2}.
\end{align}

\section{Investigation Details}
\label{sec:more_investigation_details}

\subsection{Experimental Settings}
We follow the training setting of our main experiments.
\paragraph{Training from scratch.} 
 We train the models for 200 epochs and set the initial learning rate as 0.05 with a cosine learning rate schedule. The momentum and weight decay are set to 0.9 and 0.0005 for SGD, respectively. SAM adopt the same setting except that the weight decay is set to 0.001 following \cite{mi2022make,li2023enhancing}.
We apply standard random horizontal flipping, cropping, normalization, and cutout augmentation \cite{devries2017improved}. For SAM and its modified variants, we set the perturbation radius $\rho$ as 0.1 and 0.2 for CIFAR-10 and CIFAR-100 \cite{mi2022make,li2023enhancing}. 

\paragraph{Transfer learning.} 
We use a Deit-small model \cite{deit} pre-trained on ImageNet. We use AdamW \cite{loshchilov2017decoupled} as base optimizer and train the model for 10 epochs with batch size $128$, weight decay $10^{-5}$ and initial learning rate of $10^{-4}$. 
We adopt $\rho=0.075$ for SAM and its modified variants. We apply image resizing (to $224\times 224$) and normalization for data preprocessing without extra augmentations. 

\paragraph{SAM's modified variants.} 1) SAM-full: we use full gradient $\nabla L(\boldsymbol{w})$ over the entire training dataset to calculate SAM's perturbation, i.e., $\boldsymbol{\epsilon}_s =\rho \frac{\nabla L(\boldsymbol{w})} {\|{\nabla L(\boldsymbol{w})}\|}$;
2) SAM-db: we use an extra random batch data $\mathcal{B}'$ to calculate SAM's perturbation, i.e.,  $\boldsymbol{\epsilon}_s =\rho \frac{\nabla L_{\mathcal{B}'}(\boldsymbol{w})} {\|{\nabla L_{\mathcal{B}'}(\boldsymbol{w})}\|}$
3) SAM-noise: we use residual projection direction w.r.t. the full gradient to calculate the perturbation, i.e., 
$\boldsymbol{\epsilon}_s = \rho \frac{\mathrm{Proj}^\top_{\nabla L(\boldsymbol{w})} \nabla L_{\mathcal{B}}(\boldsymbol{w})}{\|\mathrm{Proj}^\top_{\nabla L(\boldsymbol{w})} \nabla L_{\mathcal{B}}(\boldsymbol{w})\|}$. We align the gradient of the model parameters as a vector to perform gradient projection.

\subsection{More Experiments on Effects of Full Gradient Component}

To further substantiate the detrimental effects of strengthening the full gradient components, we conducted additional experiments on the CIFAR-100 dataset using the VGG16-BN architecture, as illustrated in Figure \cref{fig:strength_vgg}.

\begin{figure}
    \centering
    \includegraphics[width=0.5\linewidth]{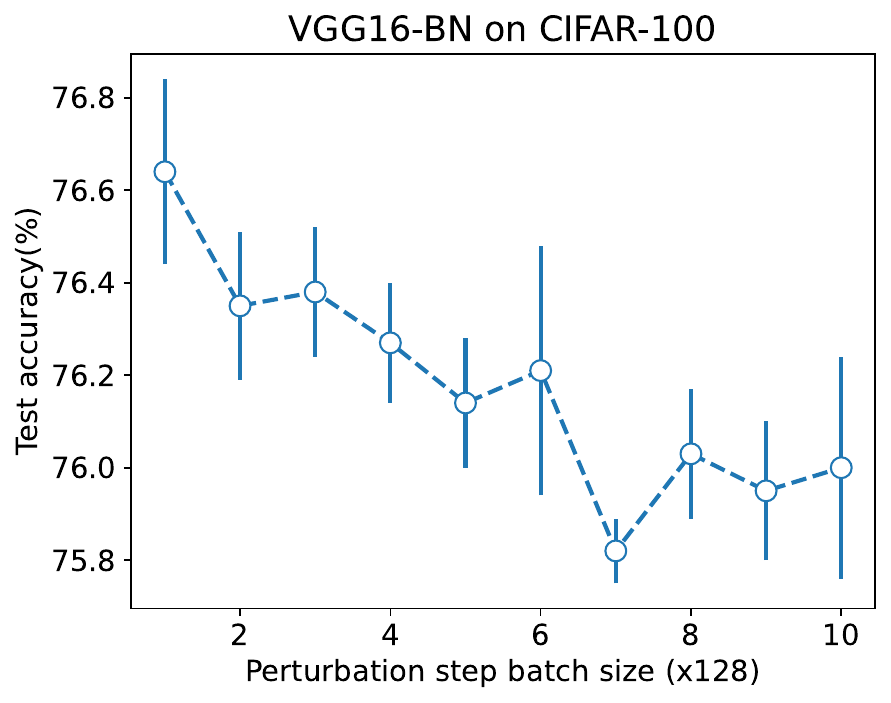}
    \caption{Results on enlarging the batch size of SAM's adversarial perturbation.}
    \label{fig:strength_vgg}
\end{figure}

\section{Training curves}
\label{sec:pratical_training_curves}
In \cref{fig:training_curves}, we compare the training curves of SAM and F-SAM. We observe that F-SAM achieves a faster convergence than SAM especially on the initial stage. This is because at the initial stage, the proportion of the full gradient component in the minibatch gradient is more significant,and hence removing this component to facilitate convergence in F-SAM has a more pronounced effect. Moreover, as the perturbation radius grows (2x in this case), the magnitude of the full gradient component in $\boldsymbol{\epsilon}_s$ also grows. 
This can significantly hinder the convergence of SAM and degrade its performance. In contrast, F-SAM is able to mitigate this undesired effects and maintain a good performance.

\begin{figure}[h]
    \centering
    \begin{subfigure}{0.45\linewidth}
 \centering
  \includegraphics[width=1\linewidth]{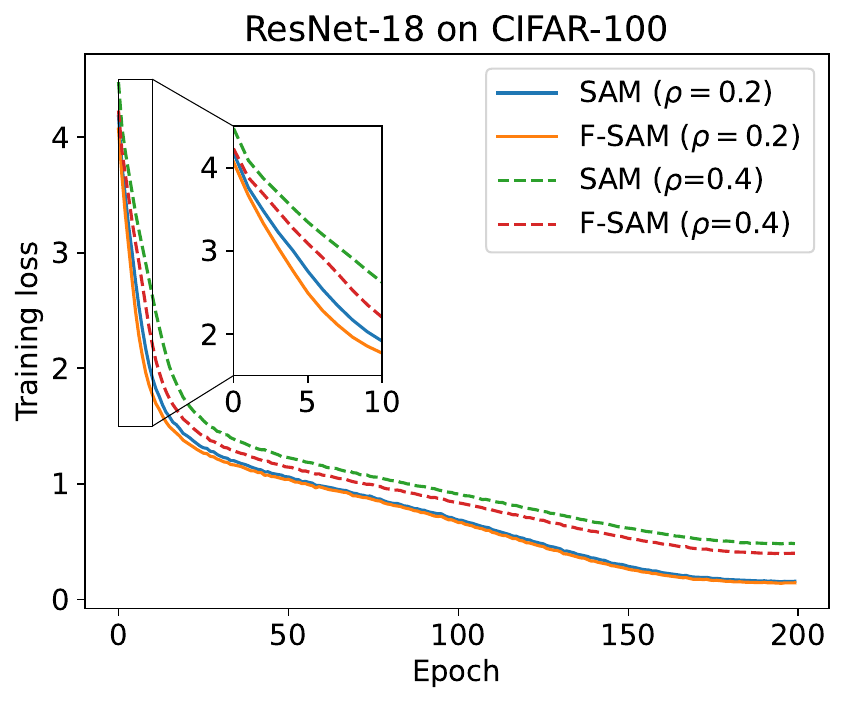}
 \end{subfigure}
    \hspace{0.03in}
    \begin{subfigure}{0.45\linewidth}
 \centering
  \includegraphics[width=1\linewidth]{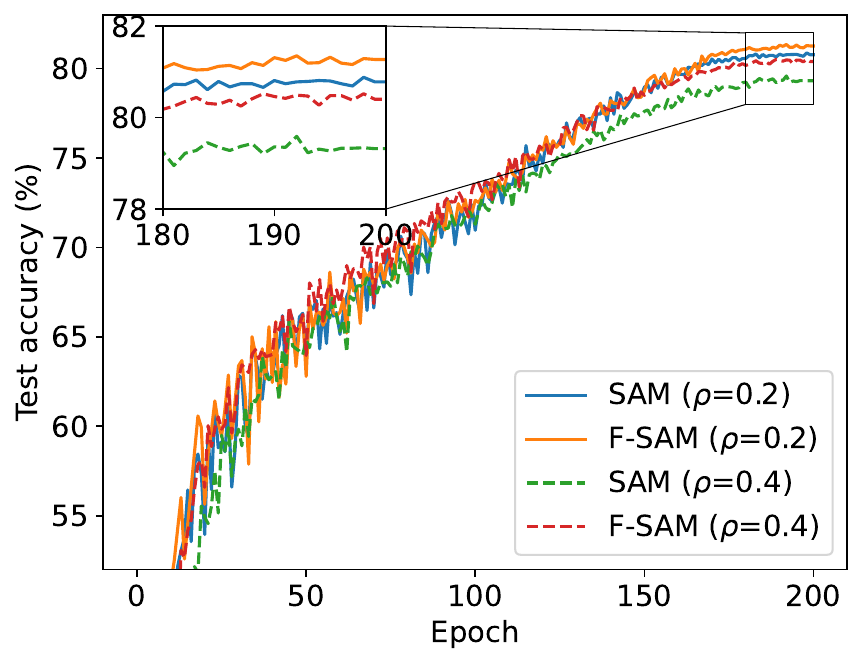}
 \end{subfigure}
 \vspace{-2mm}
    \caption{Training curve comparison on CIFAR-100 with ResNet-18. }
    \label{fig:training_curves}
\end{figure}

\section{Hessian Spectrum}
In \cref{fig:hessian}, we compare the Hessian eigenvalues of ResNet-18 trained with SAM and F-SAM. We focus on the largest eigenvalue $\lambda_1$ and the ratio of the largest to the fifth largest eigenvalue $\lambda_1/\lambda_5$.  We approximate the calculation for Hessian spectrum using the Lanczos algorithm \cite{pmlr-v97-ghorbani19b}. We observe that F-SAM achieves a smaller largest eigenvalue and smaller eigenvalue ratio compared with SAM.
This confirms that F-SAM converges to a flatter solution and achieves better generalization by removing the undesirable full gradient component in adversarial perturbation.
\begin{figure}[htbp]
	\centering
	\begin{subfigure}{0.45\linewidth}
		\centering
		\includegraphics[width=1\linewidth]{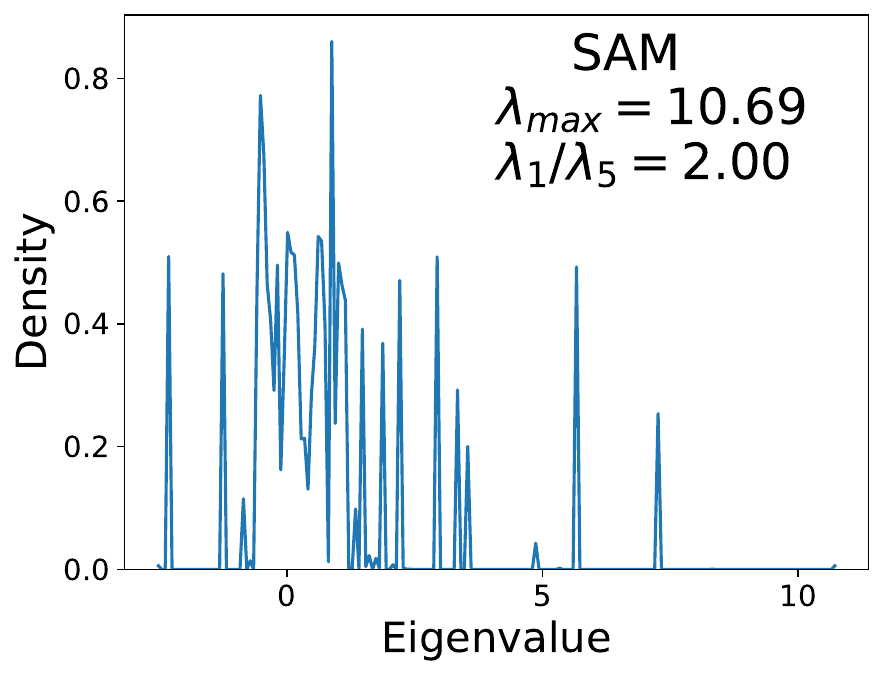}
		\label{fig:sam-fb}
	\end{subfigure}
    \hspace{0.03in}
	\begin{subfigure}{0.45\linewidth}
		\centering
		\includegraphics[width=1\linewidth]{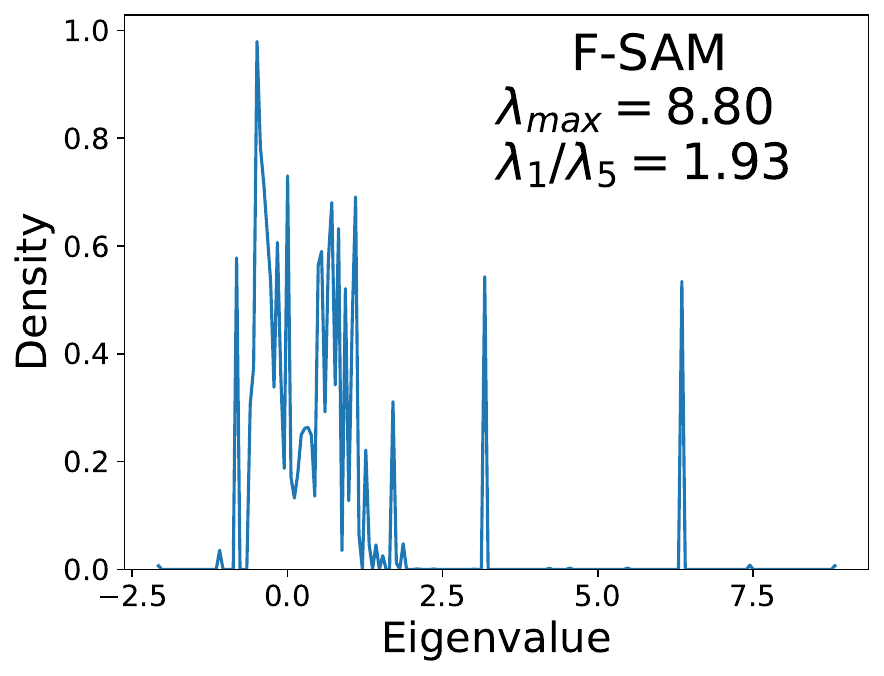}
		\label{fig:sam-db}
	\end{subfigure}
	\caption{Hessian spectrum comparison on CIFAR-10 with ResNet-18.} 
	\label{fig:hessian}
\end{figure}



\end{document}